\def\1{\bm{1}}
\DeclareMathAlphabet{\mathsfit}{\encodingdefault}{\sfdefault}{m}{sl}
\SetMathAlphabet{\mathsfit}{bold}{\encodingdefault}{\sfdefault}{bx}{n}
\newcommand{\E}{\mathbb{E}}
\DeclareMathOperator*{\argmin}{arg\,min}
\newtheorem{theorem}{Theorem}
\newtheorem{proposition}[theorem]{Proposition}
\newtheorem{corollary}[theorem]{Corollary}
\newtheorem{lemma}[theorem]{Lemma}
\theoremstyle{definition}
\newtheorem*{remark}{Remark}
\title{Stochastic Gradient/Mirror Descent: Minimax Optimality and Implicit Regularization}
\author{Navid Azizan 
\\
Department of Computing and Mathematical Sciences\\
California Institute of Technology\\
Pasadena, CA 91125 \\
\texttt{azizan@caltech.edu} \\
\And
Babak Hassibi \\
Department of Electrical Engineering \\
California Institute of Technology\\
Pasadena, CA 91125 \\
\texttt{hassibi@caltech.edu} \\
}
\begin{document}

\maketitle

\begin{abstract}
Stochastic descent methods (of the gradient and mirror varieties) have become increasingly popular in optimization. In fact, it is now widely recognized that the success of deep learning is not only due to the special deep architecture of the models, but also due to the behavior of the stochastic descent methods used, which play a key role in reaching ``good'' solutions that generalize well to unseen data. In an attempt to shed some light on why this is the case, we revisit some minimax properties of stochastic gradient descent (SGD) for the square loss of linear models---originally developed in the 1990's---and extend them to \emph{general} stochastic mirror descent (SMD) algorithms for \emph{general} loss functions and \emph{nonlinear} models. 
In particular, we show that there is a fundamental identity which holds for SMD (and SGD) under very general conditions, and which implies the minimax optimality of SMD (and SGD) for sufficiently small step size, and for a general class of loss functions and general nonlinear models.
We further show that this identity can be used to naturally establish other properties of SMD (and SGD), namely convergence and \emph{implicit regularization} for over-parameterized linear models (in what is now being called the ``interpolating regime''), some of which have been shown in certain cases in prior literature. We also argue how this identity can be used in the so-called ``highly over-parameterized'' nonlinear setting (where the number of parameters far exceeds the number of data points) to provide insights into why SMD (and SGD) may have similar convergence and implicit regularization properties for deep learning. 
\end{abstract}

\section{Introduction}

Deep learning has proven to be extremely successful in a wide variety of tasks \citep{krizhevsky2012imagenet,lecun2015deep,mnih2015human,silver2016mastering,wu2016google}. Despite its tremendous success, the reasons behind the good generalization properties of these methods to unseen data is not fully understood (and, arguably, remains somewhat of a mystery to this day). Initially, this success was mostly attributed to the special deep architecture of these models. However, in the past few years, it has been widely noted that the architecture is only part of the story, and, in fact, the optimization algorithms used to train these models, typically stochastic gradient descent (SGD) and its variants, play a key role in learning parameters that generalize well. 

In particular, it has been observed that since these deep models are \emph{highly over-parameterized}, they have a lot of capacity, and can fit to virtually any (even random) set of data points \citep{zhang2016understanding}. In other words, highly over-parameterized models can ``interpolate'' the data, so much so that this regime has been called the ``interpolating regime'' \citep{mababe2017}. In fact, on a given dataset, the loss function often has (uncountably infinitely) many \emph{global} minima, which can have drastically different generalization properties, and it is not hard to construct ``trivial'' global minima that do not generalize. Which minimum among all the possible minima we pick in practice is determined by the optimization algorithm that we use for training the model.
Even though it may seem at first that, because of the non-convexity of the loss function, the stochastic descent algorithms may get stuck in local minima or saddle points, in practice they almost always achieve a global minimum \citep{kawaguchi2016deep,zhang2016understanding,lee2016gradient}, which perhaps can also be justified by the fact that these models are highly over-parameterized. What is even more interesting is that not only do these stochastic descent algorithms converge to global minima, but they converge to ``special'' ones that generalize well, even in the absence of any explicit regularization or early stopping \citep{zhang2016understanding}. Furthermore, it has been observed that even among the common optimization algorithms, namely SGD or its variants (AdaGrad \citep{duchi2011adaptive}, RMSProp \citep{tieleman2012lecture}, Adam \citep{kingma2014adam}, etc.), there is a discrepancy in the solutions achieved by different algorithms and their generalization capabilities \citep{wilson2017marginal}, which again highlights the important role of the optimization algorithm in generalization. 

There have been many attempts in recent years to explain the behavior and properties of these stochastic optimization algorithms, and many interesting insights have been obtained \citep{achille2017emergence,chaudhari2018stochastic,shwartz2017opening,soltanolkotabi2017theoretical}. In particular, it has been argued that the optimization algorithms perform an \emph{implicit regularization} \citep{neyshabur2017geometry,ma2017implicit,gunasekar2017implicit,gunasekar2018characterizing,soudry2017implicit,gunasekar2018implicit} while optimizing the loss function, which is perhaps why the solution generalizes well.
Despite this recent progress, most results explaining the behavior of the optimization algorithm, even for SGD, are limited to linear or very simplistic models. Therefore, a general characterization of the behavior of stochastic descent algorithms for more general models would be of great interest.

\subsection{Our Contribution}
In this paper, we present an alternative explanation of the behavior of SGD, and more generally, the stochastic mirror descent (SMD) family of algorithms, which includes SGD as a special case. We do so by obtaining a fundamental identity for such algorithms (see Lemmas~\ref{lem:sum} and \ref{lem:sum_SMD}). Using these identities, we show that for general nonlinear models and general loss functions, when the step size is sufficiently small, SMD (and therefore also SGD) is the optimal solution of a certain minimax filtering (or online learning) problem. The minimax formulation is inspired by, and rooted, in $H^{\infty}$ filtering theory, which was originally developed in the 1990's in the context of robust control theory \citep{hassibi1999indefinite,simon2006optimal,hassibi1996h}, and we generalize several results from this literature, e.g., \citep{hassibi1994hoo,kivinen2006p}. Furthermore, we show that many properties recently proven in the learning/optimization literature, such as the implicit regularization of SMD in the over-parameterized linear case---when convergence happens---\citep{gunasekar2018characterizing}, naturally follow from this theory. The theory also allows us to establish new results, such as the convergence (in a deterministic sense) of SMD in the over-parameterized linear case. We also use the theory developed in this paper to provide some speculative arguments into why SMD (and SGD) may have similar convergence and implicit regularization properties in the so-called ``highly over-parameterized'' nonlinear setting (where the number of parameters far exceeds the number of data points) common to deep learning. 

In an attempt to make the paper easier to follow, we first describe the main ideas and results in a simpler setting, namely, SGD on the square loss of linear models, in Section~\ref{sec:warmup}, and mention the connections to $H^{\infty}$ theory. The full results, for SMD on a general class of loss functions and for general nonlinear models, are presented in Section~\ref{sec:SMD}. We demonstrate some implications of this theory, such as deterministic convergence and implicit regularization, in Section~\ref{sec:implications}, and we finally conclude with some remarks in Section~\ref{sec:conclusion}. Most of the formal proofs are relegated to the appendix.

\section{Preliminaries}

Denote the training dataset by $\{(x_i,y_i): i=1,\dots,n\}$, where $x_i\in\mathbb{R}^d$ are the inputs, and $y_i\in\mathbb{R}$ are the labels. We assume that the data is generated through a (possibly nonlinear) model $f_i(w)=f(x_i,w)$ with some parameter vector $w\in\mathbb{R}^m$, plus some noise $v_i$, i.e., $y_i=f(x_i,w)+v_i$ for $i=1,\dots,n$. The noise can be due to actual measurement error, or it can be due to modeling error (if the model $f(x_i,\cdot)$ is not rich enough to fully represent the data), or it can be a combination of both. As a result, we do not make any assumptions on the noise (such as stationarity, whiteness, Gaussianity, etc.). 

Since typical deep models have a lot of capacity and are highly over-parameterized, we are particularly interested in the over-parameterized (so-caled interpolating) regime, i.e., when $m>n$. In this case, there are many parameter vectors $w$ (in fact, uncountably infinitely many) that are consistent with the observations. We denote the set of these parameter vectors by
\begin{equation}
\mathcal{W} = \left\{w\in\mathbb{R}^m\mid y_i=f(x_i,w),\ i=1,\dots,n\right\}.
\end{equation}
(Note the absence of the noise term, since in this regime we can fully interpolate the data.) The set $\mathcal{W}$ is typically an ($m-n$)-dimensional manifold and depends only on the training data $\{(x_i,y_i): i=1,\dots,n\}$ and nonlinear model $f(\cdot,\cdot)$.

The total loss on the training set (empirical risk) can be denoted by
$L(w)=\sum_{i=1}^n L_i(w)$, where $L_i(\cdot)$ is the loss on the individual data point $i$. We assume that the loss $L_i(\cdot)$ depends only on the residual, i.e., the difference between the prediction and the true label. In other words,
\begin{equation}
L_i(w) = l(y_i-f(x_i,w)) ,
\end{equation}
where $l(\cdot)$ can be any nonnegative differentiable function with $l(0)=0$. Typical examples of $l(\cdot)$ include square ($l_2$) loss, Huber loss, etc.
We remark that, in the interpolating regime, every parameter vector in the set $\mathcal{W}$ renders each individual loss zero, i.e., $L_i(w) = 0$, for all $w\in\mathcal{W}$.

\section{Warm-up: Revisiting SGD on Square Loss of Linear Models}\label{sec:warmup}


In this section, we describe the main ideas and results in a simple setting, i.e., stochastic gradient descent (SGD) for the square loss of a linear model, and we revisit some of the results from $H^{\infty}$ theory \citep{hassibi1999indefinite,simon2006optimal}. In this case, the data model is $y_i = x_i^Tw+v_i$, $i=1,\ldots, n$ (where there is no assumption on $v_i$) and the loss function is $L_i(w) = \frac{1}{2}(y_i-x_i^Tw)^2$. 


Assuming the data is indexed randomly, the SGD updates are defined as $w_i = w_{i-1}-\eta\nabla L_i(w_{i-1})$, where $\eta>0$ is the step size or learning rate.\footnote{For the sake of simplicity of presentation, we present the results for constant step size. We show in the appendix that all the results extend to the case of time-varying step-size.} The update in this case can be expressed as
\begin{equation}\label{eq:SGD}
w_i = w_{i-1}+\eta \left(y_i-x_i^Tw_{i-1}\right)x_i ,
\end{equation}
for $i\geq 1$ (for $i>n$, we can either cycle through the data, or select them at random).

\begin{remark}
We should point out that, when the step size $\eta$ is fixed, the SGD recursions have no hope of converging, unless there exists a weight vector $w$ which perfectly interpolates the data $\{(x_i,y_i): i=1,\dots,n\}$. The reason being that, if this is not the case, for any estimated weight vector in SGD there will exist at least one data point that has a nonzero instantaneous gradient and that will  therefore move the estimate by a non-vanishing amount.\footnote{Of course, one may get convergence by having a vanishing step size $\eta_i\rightarrow 0$. However, in this case, convergence is not surprising---since, effectively, after a while the weights are no longer being updated---and the more interesting question is ``what'' the recursion converges to.} It is for this reason that the results on the convergence of SGD and SMD (Sections~\ref{sec:warmup_convergence} and \ref{sec:implications}) pertain to the interpolating regime. 
\end{remark}

\subsection{Conservation of Uncertainty}

Prior to the $i$-th step of any optimization algorithm, we have two sources of uncertainty: our uncertainty about the unknown parameter vector $w$, which we can represent by $w-w_{i-1}$, and our uncertainty about the $i$-th data point $(x_i,y_i)$, which we can represent by the noise $v_i$. After the $i$-th step, the uncertainty about $w$ is transformed to $w-w_i$. But what about the uncertainty in $v_i$? What is it transformed to? In fact, we will view any optimization algorithm as one which redistributes the uncertainties at time $i-1$ to new uncertainties at time $i$. The two uncertainties, or error terms, we will consider are $e_i$ and $e_{p,i}$, defined as follows.
\begin{equation}
e_i := y_i-x_i^Tw_{i-1} , \text{ and } e_{p,i} := x_i^Tw-x_i^Tw_{i-1} .
\end{equation}
$e_i$ is often referred to as the {\em innvovations} and is the error in predicting $y_i$, given the input $x_i$. $e_{p,i}$ is sometimes called the {\em prediction error}, since it is the error in predicting the noiseless output $x_i^Tw$, i.e., in predicting what the best output of the model is. In the absence of noise, $e_i$ and $e_{p,i}$ coincide. 

One can show that SGD transforms the uncertainties in the fashion specified by the following lemma, which was first noted in \citep{hassibi1996h}.
\begin{lemma}\label{lem:equality}
For any parameter $w$ and noise values $\{v_i\}$ that satisfy $y_i =x_i^Tw+v_i$ for $i=1,\dots,n$, and for any step size $\eta>0$, the following relation holds for the SGD iterates $\{w_i\}$ given in Eq.~\eqref{eq:SGD}
\begin{equation}\label{eq:equality}
\|w-w_{i-1}\|^2 + \eta v_i^2 = \|w-w_i\|^2 + \eta\left(1-\eta\|x_i\|^2\right)e_i^2 + \eta e_{p,i}^2,\quad \forall i\geq 1.
\end{equation}
\end{lemma}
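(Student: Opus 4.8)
The plan is to establish the identity by directly completing the square: expand $\|w-w_i\|^2$ using the SGD recursion and then match terms, the only non-algebraic ingredient being a relation among the three error quantities $e_i$, $e_{p,i}$, and $v_i$.

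First I would rewrite the update \eqref{eq:SGD} compactly as $w_i = w_{i-1} + \eta e_i x_i$ (using $e_i = y_i - x_i^Tw_{i-1}$), so that $w - w_i = (w-w_{i-1}) - \eta e_i x_i$. Taking squared norms and expanding the cross term produces three pieces: $\|w-w_{i-1}\|^2$, a linear term $-2\eta e_i\, x_i^T(w-w_{i-1})$, and the quadratic term $\eta^2 e_i^2\|x_i\|^2$. The key simplification is to notice that $x_i^T(w-w_{i-1}) = x_i^Tw - x_i^Tw_{i-1} = e_{p,i}$, by the very definition of the prediction error, so the cross term becomes $-2\eta e_i e_{p,i}$. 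Rearranging gives
\begin{equation}
\|w-w_{i-1}\|^2 - \|w-w_i\|^2 = 2\eta e_i e_{p,i} - \eta^2\|x_i\|^2 e_i^2 .
\end{equation}

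It then remains to verify that this right-hand side equals $\eta(1-\eta\|x_i\|^2)e_i^2 + \eta e_{p,i}^2 - \eta v_i^2$, which is precisely the claimed identity after moving $\eta v_i^2$ across. Cancelling the common $\eta^2\|x_i\|^2 e_i^2$ term and dividing through by $\eta$, this collapses to the purely quadratic statement $v_i^2 = (e_i - e_{p,i})^2$.

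The one genuinely essential step---and the conceptual core of the lemma---is supplying this last relation, which I would obtain from the data model $y_i = x_i^Tw + v_i$: indeed $e_i - e_{p,i} = (y_i - x_i^Tw_{i-1}) - (x_i^Tw - x_i^Tw_{i-1}) = y_i - x_i^Tw = v_i$, so the innovation decomposes as $e_i = e_{p,i} + v_i$ and the desired $v_i^2 = (e_i-e_{p,i})^2$ holds identically. I do not anticipate any real obstacle beyond carefully tracking the $\eta$ and $\|x_i\|^2$ bookkeeping; the noteworthy feature is that nothing about the noise $\{v_i\}$ (stationarity, whiteness, Gaussianity) or about $w$ being optimal is ever used, so the identity holds verbatim for \emph{arbitrary} $w$ and $\{v_i\}$ and at \emph{every} iterate along the trajectory---which is exactly what makes it a ``conservation'' law rather than a one-sided inequality.
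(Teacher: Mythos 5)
Your proof is correct and follows essentially the same route as the paper: the paper squares the decomposition $\sqrt{\eta}\,v_i=\sqrt{\eta}\,e_i-\sqrt{\eta}\,e_{p,i}$ and the recursion $w-w_i=(w-w_{i-1})-\eta e_i x_i$ and subtracts the results, which is precisely your expansion of $\|w-w_i\|^2$ combined with the residual identity $v_i^2=(e_i-e_{p,i})^2$. The only difference is bookkeeping order, and your closing observation that no assumption on $w$ or $\{v_i\}$ is used matches the paper's framing of the result as a conservation law.
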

As illustrated in Figure~\ref{fig:SGD}, this means that each step of SGD can be thought of as a lossless transformation of the input uncertainties to the output uncertainties, with the specified coefficients. 

Once one knows this result, proving it is straightforward. To see that, note that we can write $v_i=y_i-x_i^Tw$ as $v_i=(y_i-x_i^Tw_{i-1})-(x_i^Tw-x_i^Tw_{i-1})$. Multiplying both sides by $\sqrt{\eta}$, we have
\begin{equation}\label{eq:SGDequality_proof_eq1}
\sqrt{\eta}v_i=\sqrt{\eta}(y_i-x_i^Tw_{i-1})-\sqrt{\eta}(x_i^Tw-x_i^Tw_{i-1}) .
\end{equation}
On the other hand, subtracting both sides of the update rule~\eqref{eq:SGD} from $w$ yields 
\begin{equation}\label{eq:SGDequality_proof_eq2}
w-w_i = (w-w_{i-1})-\eta\left(y_i-x_i^Tw_{i-1}\right)x_i. 
\end{equation}
Squaring both sides of \eqref{eq:SGDequality_proof_eq1} and \eqref{eq:SGDequality_proof_eq2}, and subtracting the results leads to Equation~\eqref{eq:equality}.

A nice property of Equation~\eqref{eq:equality} is that, if we sum over all $i=1,\dots,T$, the terms $\|w-w_i\|^2$ and $\|w-w_{i-1}\|^2$ on different sides cancel out telescopically, leading to the following important lemma.
\begin{lemma}\label{lem:sum}
For any parameter $w$ and noise values $\{v_i\}$ that satisfy $y_i =x_i^Tw+v_i$ for $i=1,\dots,n$, any initialization $w_0$, any step size $\eta>0$, and any number of steps $T\geq 1$, the following relation holds for the SGD iterates $\{w_i\}$ given in Eq.~\eqref{eq:SGD}
\begin{empheq}[box=\fbox]{equation}\label{eq:sum}
\|w-w_{0}\|^2 + \eta \sum_{i=1}^T v_i^2 = \|w-w_T\|^2 + \eta\sum_{i=1}^T\left(1-\eta\|x_i\|^2\right)e_i^2 + \eta \sum_{i=1}^Te_{p,i}^2 .
\end{empheq}
\end{lemma}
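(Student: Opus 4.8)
The plan is to obtain the statement as an immediate consequence of the per-step identity in Lemma~\ref{lem:equality}, which is already established for every $i\geq 1$. For indices $i>n$, where the data is cycled through or resampled, the pair $(x_i,y_i)$ simply repeats one of the original pairs, so setting $v_i := y_i - x_i^Tw$ just reuses one of $v_1,\dots,v_n$ and the relation $y_i = x_i^Tw+v_i$ continues to hold; hence Equation~\eqref{eq:equality} is valid at each such $i$ as well. First I would write down Equation~\eqref{eq:equality} for each index $i=1,2,\dots,T$ and add the $T$ resulting equations, obtaining
\[
\sum_{i=1}^T \|w-w_{i-1}\|^2 + \eta\sum_{i=1}^T v_i^2 = \sum_{i=1}^T \|w-w_i\|^2 + \eta\sum_{i=1}^T\left(1-\eta\|x_i\|^2\right)e_i^2 + \eta\sum_{i=1}^T e_{p,i}^2 .
\]

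The key step is the telescoping cancellation of the terms involving $w$. The sum $\sum_{i=1}^T \|w-w_{i-1}\|^2$ ranges over the indices $0,1,\dots,T-1$, whereas $\sum_{i=1}^T \|w-w_i\|^2$ ranges over $1,2,\dots,T$; the two therefore share every intermediate term $\|w-w_1\|^2,\dots,\|w-w_{T-1}\|^2$, and these cancel once the second sum is brought to the left-hand side. What survives is $\|w-w_0\|^2$ from the first sum and $-\|w-w_T\|^2$ from the second. Substituting this back, while leaving the $v_i^2$, $e_i^2$, and $e_{p,i}^2$ sums untouched, yields precisely Equation~\eqref{eq:sum}.

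I do not expect a genuine obstacle here: all of the analytic content---the completion-of-squares manipulation linking the innovations $e_i$, the prediction error $e_{p,i}$, and the noise $v_i$---has already been absorbed into Lemma~\ref{lem:equality}, so the passage to the aggregate identity is a purely mechanical summation. As a cross-check I would also verify the claim by induction on $T$: the base case $T=1$ is exactly Lemma~\ref{lem:equality} at $i=1$, and the inductive step adds Equation~\eqref{eq:equality} at $i=T$ to the inductive hypothesis, at which point the term $\|w-w_{T-1}\|^2$ cancels between the two and the identity advances from $T-1$ to $T$. It is worth emphasizing that, exactly like Lemma~\ref{lem:equality}, the resulting identity is \emph{exact} and presupposes nothing about $w$, $\eta$, $w_0$, or $T$---in particular no convergence, no smallness of the step size, and no interpolation---which is what makes it a suitable foundation for the minimax optimality and implicit-regularization arguments developed later.
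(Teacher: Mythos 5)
Your proof is correct and matches the paper's own argument exactly: the paper obtains Lemma~\ref{lem:sum} precisely by summing the per-step identity of Lemma~\ref{lem:equality} over $i=1,\dots,T$ and letting the $\|w-w_i\|^2$ and $\|w-w_{i-1}\|^2$ terms cancel telescopically. Your additional observations (validity for $i>n$ under cycling, and the induction cross-check) are sound but not needed beyond what the paper states.
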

As we will show next, this identity captures most properties of SGD, and implies several important results in a very transparent fashion. For this reason, this relation can be viewed as a ``fundamental identity'' for SGD.

\begin{figure}[t]
\begin{center}
\includegraphics[width=0.6\columnwidth]{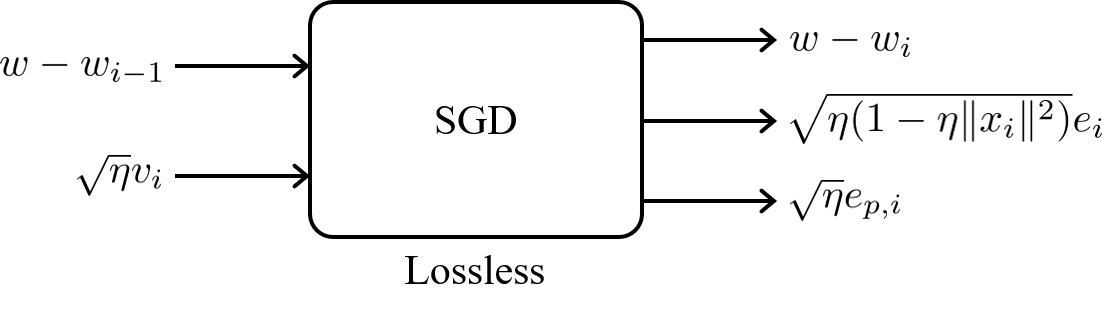}
\end{center}
\caption{Illustration of Lemma~\ref{lem:equality}. Each step of SGD can be viewed as a transformation of the uncertainties with the right coefficients.}\label{fig:SGD}
\end{figure}

\subsection{Minimax Optimality of SGD}
For a given horizon $T$, consider the following minimax problem:
\begin{equation}
\min_{\{w_i\}}~\max_{w,\{v_i\}}~\frac{\|w-w_T\|^2+\eta\sum_{i=1}^Te_{p,i}^2}{\|w-w_0\|^2+\eta\sum_{i=1}^Tv_i^2} .
\label{hinf}
\end{equation}
This minimax problem is motivated by the theory of $H^\infty$ control and estimation \citep{francis,hassibi1999indefinite,basarbernhard}. The denominator of the cost function can be interpreted as the {\em energy of the uncertainties} and consists of two terms, $\|w-w_0\|^2$, the energy of our uncertainty of the unknown weight vector at the beginning of learning when we have not yet observed the data, and $\sum_{i=1}^Tv_i^2$, the energy of the uncertainty in the measurements. The numerator denotes the energy of the estimation errors in an {\em online setting}. The first term, $\|w-w_T\|^2$, is the energy of our uncertainty of the unknown weight vector after we have observed $T$ data points, and the second term, $\sum_{i=1}^Te_{p,i}^2 = \sum_{i=1}^T(x_i^Tw-x_i^Tw_{i-1})^2$, is the energy of the prediction error, i.e., how well we can predict the true uncorrupted output $x_i^Tw$ using measurements up to time $i-1$. The parameter $\eta$ weighs the two energy terms relative to each other. In this minimax problem, nature has access to the unknown weight vector $w$ and the noise sequence $v_i$ and would like to maximize the energy gain from the uncertainties to prediction errors (so that the estimator behaves poorly), whereas the estimator attempts to minimize the energy gain. Such an estimator is referred to as $H^\infty$-optimal and is robust because it safeguards against the worst-case noise. It is also conservative---for the exact same reason.\footnote{The setting described is somewhat similar to the setting of online learning, where one considers the relative performance of an online learner who needs to predict, compared to a clairvoyant one who has access to the entire data set \citep{onlineschwatrz,onlinehazan}. In online learning, the relative performance is described as a difference, rather than as a ratio in $H^\infty$ theory, and is referred to as {\em regret}.}


\begin{theorem} \label{thm:SGDminimax}
For any initialization $w_0$, any 
step size 
$0<\eta\leq\min_i\frac{1}{\|x_i\|^2}$, and any number of steps $T\geq 1$, the stochastic gradient descent iterates $\{w_i\}$ given in Eq.~\eqref{eq:SGD} are the optimal solution to the minimax problem (\ref{hinf}). Furthermore, the optimal minimax value (achieved by SGD) is $1$.
\end{theorem}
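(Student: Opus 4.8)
The plan is to read the theorem directly off the fundamental identity of Lemma~\ref{lem:sum}, which already has the numerator and denominator of the minimax cost (\ref{hinf}) sitting on opposite sides. First I would divide that identity by $\|w-w_0\|^2+\eta\sum_{i=1}^T v_i^2$ and rearrange, obtaining, for the SGD iterates and \emph{every} admissible pair $(w,\{v_i\})$,
\[
\frac{\|w-w_T\|^2+\eta\sum_{i=1}^T e_{p,i}^2}{\|w-w_0\|^2+\eta\sum_{i=1}^T v_i^2}
= 1-\frac{\eta\sum_{i=1}^T\left(1-\eta\|x_i\|^2\right)e_i^2}{\|w-w_0\|^2+\eta\sum_{i=1}^T v_i^2}.
\]
The step-size hypothesis $0<\eta\le\min_i \|x_i\|^{-2}$ is exactly what makes each coefficient $1-\eta\|x_i\|^2$ nonnegative, so the subtracted term is nonnegative and the ratio is $\le 1$ for all choices of nature. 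This yields the easy inequality: the worst-case ratio for SGD is $\le 1$.

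To see that SGD actually attains the value $1$ (so the supremum is not strictly smaller), I would exhibit a maximizing choice for nature: fix any $w\ne w_0$ and set $v_i=x_i^T(w_0-w)$, equivalently $y_i=x_i^T w_0$. Then every instantaneous gradient vanishes, so SGD never moves ($w_i=w_0$ for all $i$, whence $w_T=w_0$ and $e_i=0$), while $e_{p,i}=x_i^T(w-w_0)=-v_i$. Substituting makes numerator and denominator coincide, so the ratio equals $1$. Hence the worst-case ratio for the SGD recursion is exactly $1$.

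It remains to prove optimality, i.e. that no other causal (online) estimator $\{w_i\}$ can push its worst-case ratio below $1$; this is the step I expect to be the main obstacle, since it must hold for \emph{every} estimator rather than following from the identity. The idea is a matching adversarial construction. Given an arbitrary estimator in which $w_{i-1}$ depends only on $(x_j,y_j)_{j<i}$, define the data causally by $y_i=x_i^T w_{i-1}$; this recursion is well defined and, crucially, produces a trajectory $w_0,\dots,w_T$ that does \emph{not} depend on the still-free parameter $w$. For any $w$ I then set $v_i:=y_i-x_i^T w=x_i^T(w_{i-1}-w)=-e_{p,i}$, so $v_i^2=e_{p,i}^2$ and the shared terms cancel, leaving the ratio equal to $\bigl(\|w-w_T\|^2+\eta\sum_{i=1}^T e_{p,i}^2\bigr)\big/\bigl(\|w-w_0\|^2+\eta\sum_{i=1}^T e_{p,i}^2\bigr)$, which is $\ge 1$ precisely when $\|w-w_T\|^2\ge\|w-w_0\|^2$. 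Since $w_0$ and $w_T$ are now fixed, choosing $w=t(w_0-w_T)$ and letting $t\to\infty$ (or any $w$ in the degenerate case $w_T=w_0$) forces this inequality, so every estimator has worst-case ratio $\ge 1$. Combining the three steps gives $\min_{\{w_i\}}\max_{w,\{v_i\}}$ equal to $1$, attained by SGD, which is the claim.
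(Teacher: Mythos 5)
Your proposal is correct and follows essentially the same two-sided argument as the paper, which obtains Theorem~\ref{thm:SGDminimax} as the special case $\psi(w)=\frac{1}{2}\|w\|^2$, $f(x_i,w)=x_i^Tw$, $l(z)=\frac{1}{2}z^2$ of Theorem~\ref{thm:SMD}: the fundamental identity of Lemma~\ref{lem:sum} together with nonnegativity of $1-\eta\|x_i\|^2$ gives the ratio $\le 1$ for SGD, and the causally well-defined adversarial data $y_i=x_i^Tw_{i-1}$, which zeroes every estimator's innovations and forces $v_i^2=e_{p,i}^2$, gives the lower bound of $1$ for any causal estimator. The only (immaterial) difference is the final choice of $w$: the paper picks $\hat{w}$ solving the equalizing linear equation~\eqref{eq:solve_for_w}, which here is the bisecting hyperplane $\|w-w_T\|^2=\|w-w_0\|^2$, whereas you take $w=t(w_0-w_T)$ with $t$ large to force $\|w-w_T\|^2\ge\|w-w_0\|^2$; both handle the degenerate case $w_T=w_0$ in the same way.
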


This theorem explains the observed robustness and conservatism of SGD. Despite the conservativeness of safeguarding against the worst-case disturbance, this choice may actually be the rational thing to do in situations where we do not have much knowledge about the disturbances, which is the case in many machine learning tasks.

Theorem~\ref{thm:SGDminimax} holds for any horizon $T\geq 1$. A variation of this result, i.e., when $T\to\infty$ and without the $\|w-w_T\|^2$ term in the numerator, was first shown in \citep{hassibi1994hoo,hassibi1996h}. In that case, the ratio $\frac{\eta\sum_{i=1}^{\infty}e_{p,i}^2}{\|w-w_0\|^2+\eta\sum_{i=1}^{\infty}v_i^2}$ in the minimax problem is in fact the \emph{$H^{\infty}$ norm} of the transfer operator that maps the unknown disturbances $(w-w_0,\{\sqrt{\eta}v_i\})$ to the prediction errors $\{\sqrt{\eta}e_{p,i}\}$.

We end this section with a stochastic interpretation of SGD \citep{hassibi1996h}. Assume that the true weight vector has a normal distribution with mean $w_0$ and covariance matrix $\eta I$, and that the noise $v_i$ are iid standard normal. Then SGD solves
\begin{equation}
\min_{\{w_i\}} \E \exp\left(\frac{1}{2}\cdot\left(\|w-w_T\|^2+\eta\sum_{i=1}^T(x_i^Tw-x_i^Tw_{i-1})^2\right)\right),
\label{risk-sense}
\end{equation}
and no exponent larger than $\frac{1}{2}$ is possible, in the sense that no estimator can keep the expected cost finite. This means that, in the Gaussian setting, SGD minimizes the expected value of an {\em exponential} quadratic cost. The algorithm is thus very adverse to large estimation errors, as they are penalized exponentially larger than moderate ones. 


\subsection{Convergence and Implicit Regularization}\label{sec:warmup_convergence}
The over-parameterized (interpolating) linear regression regime is a simple but instructive setting, recently considered in some papers \citep{gunasekar2018characterizing,zhang2016understanding}. In this setting, we can show that, for sufficiently small step, i.e. $0<\eta\leq\min_i\frac{1}{\|x_i\|^2}$, SGD always converges to a special solution among all the solutions $\mathcal{W}$, in particular to the one with the smallest $l_2$ distance from $w_0$. In other words, if, for example, initialized at zero, SGD implicitly regularizes the solution according to an $l_2$ norm. This result follows directly from Lemma~\ref{lem:sum}.

To see that, note that in the interpolating case the $v_i$ are zero, 
and we have $e_i=y_i-x_i^Tw_{i-1}=x_i^Tw-x_i^Tw_{i-1}=e_{p,i}$. Hence, identity \eqref{eq:sum} reduces to
\begin{equation}\label{eq:equality_noiseless_sum}
\|w-w_{0}\|^2 = \|w-w_{T}\|^2 + \eta\sum_{i=1}^T\left(2-\eta\|x_i\|^2\right)e_i^2 ,
\end{equation}
for all $w\in\mathcal{W}$. By dropping the $\|w-w_T\|^2$ term and taking $T\to\infty$, we have $\eta\sum_{i=1}^\infty\left(2-\eta\|x_i\|^2\right)e_i^2 \leq \|w-w_{0}\|^2$, which implies that, for $0<\eta<\min_i\frac{2}{\|x_i\|^2}$, we must have $e_i\to 0$ as $i\to\infty$. When $e_i=y_i-x_i^Tw_{i-1}$ goes to zero, the updates in \eqref{eq:SGD} vanish and we get convergence, i.e., $w\to w_{\infty}$. Further, again because $e_i\to 0$, all the data points are being fit, which means $w_\infty\in\mathcal{W}$.
Moreover, it is again very straightforward to see from \eqref{eq:equality_noiseless_sum} that the solution converged to is the one with minimum Euclidean norm from the initial point. To see that, notice that the summation term in Eq.~\eqref{eq:equality_noiseless_sum} is \emph{independent of $w$} (it depends only on $x_i,y_i$ and $w_0$). Therefore, by taking $T\to\infty$ and minimizing both sides with respect to $w\in\mathcal{W}$, we get
\begin{equation}
w_{\infty}=\argmin_{w\in\mathcal{W}}\|w-w_0\| .
\end{equation}
Once again, this also implies that if SGD is initialized at the origin, i.e., $w_0=0$, then it converges to the minimum-$l_2$-norm solution, among all the solutions.

\section{Main Result: General Characterization of Stochastic Mirror Descent}\label{sec:SMD}
Stochastic Mirror Descent (SMD) \citep{nemirovskii1983problem,beck2003mirror,cesa2012mirror,zhou2017stochastic} is one of the most widely used families of algorithms for stochastic optimization, which includes SGD as a special case. In this section, we provide a characterization of the behavior of general SMD, on \emph{general} loss functions and \emph{general} nonlinear models, in terms of a fundamental identity and minimax optimality.

For any strictly convex and differentiable potential $\psi(\cdot)$, the corresponding SMD updates are defined as
\begin{equation}
w_i = \argmin_w\ \eta w^T\nabla L_i(w_{i-1})+D_{\psi}(w,w_{i-1}) ,
\end{equation}
where
\begin{equation}
D_{\psi}(w,w_{i-1})=\psi(w)-\psi(w_{i-1})-\nabla\psi(w_{i-1})^T (w-w_{i-1})
\end{equation}
is the Bregman divergence with respect to the potential function $\psi(\cdot)$. Note that $D_\psi(\cdot,\cdot)$ is non-negative, convex in its first argument, and that, due to strict convexity, $D_\psi(w,w') = 0$ iff $w=w'$. Moreover, the updates can be equivalently written as
\begin{equation}\label{eq:SMD}
\nabla\psi(w_i)=\nabla\psi(w_{i-1})-\eta\nabla L_i(w_{i-1}) ,
\end{equation}
which are uniquely defined because of the invertibility of $\nabla\psi$ (again, implied by the strict convexity of $\psi(\cdot)$). In other words, stochastic mirror descent can be thought of as transforming the variable $w$, with a \emph{mirror map} $\nabla\psi(\cdot)$, and performing the SGD update on the new variable. For this reason, $\nabla\psi(w)$ is often referred to as the \emph{dual} variable, while $w$ is the \emph{primal} variable.

Different choices of the potential function $\psi(\cdot)$ yield different optimization algorithms, which, as we will see, result in different implicit regularizations. To name a few examples: For the potential function $\psi(w)=\frac{1}{2}\|w\|^2$, the Bregman divergence is $D_{\psi}(w,w')=\frac{1}{2}\|w-w'\|^2$, and the update rule reduces to that of SGD. For $\psi(w)=\sum_j w_j\log w_j$, the Bregman divergence becomes the unnormalized relative entropy (Kullback-Leibler divergence) $D_{\psi}(w,w')=\sum_j w_j\log\frac{w_j}{w'_j}-\sum_j w_j + \sum_j w'_j$, which corresponds to the exponentiated gradient descent (aka the exponential weights) algorithm. 
Other examples include $\psi(w)=\frac{1}{2}\|w\|_Q^2=\frac{1}{2}w^TQw$ for a positive definite matrix $Q$, which yields $D_{\psi}(w,w')=\frac{1}{2}(w-w')^TQ(w-w')$, and the $q$-norm squared $\psi(w)=\frac{1}{2}\|w\|_q^2$, which with $\frac{1}{p}+\frac{1}{q}=1$ yields the $p$-norm algorithms \citep{grove2001general,gentile2003robustness}.

In order to derive an equivalent ``conservation law'' for SMD, similar to the identity \eqref{eq:equality}, we first need to define a new measure for the difference between the parameter vectors $w$ and $w'$ according to the loss function $L_i(\cdot)$. To that end, let us define
\begin{equation}\label{eq:D_Li}
D_{L_i}(w,w'):=L_i(w)-L_i(w')-\nabla L_i(w')^T(w-w'),
\end{equation}
which is defined in a similar way to a Bregman divergence for the loss function.\footnote{It is easy to verify that for linear models and quadratic loss we obtain $D_{L_i}(w,w') = (x_i^Tw-x_i^Tw')^2$.} The difference though is that, unlike the potential function of the Bregman divergence, the loss function $L_i(\cdot) = \ell(y_i-f(x_i,\cdot))$ need not be convex, even when $\ell(\cdot)$ is, due to the nonlinearity of $f(\cdot,\cdot)$. As a result, $D_{L_i}(w,w')$ is not necessarily non-negative. The following result, which is the general counterpart of Lemma~\ref{lem:equality}, states the identity that characterizes SMD updates in the general setting.

\begin{lemma}\label{lem:equality_SMD}
For any (nonlinear) model $f(\cdot,\cdot)$, any differentiable loss $l(\cdot)$, any parameter $w$ and noise values $\{v_i\}$ that satisfy $y_i =f(x_i,w)+v_i$ for $i=1,\dots,n$, and any step size $\eta>0$, the following relation holds for the SMD iterates $\{w_i\}$ given in Eq.~\eqref{eq:SMD}
\begin{equation}\label{eq:equality_SMD}
D_{\psi}(w,w_{i-1})+\eta l(v_i) = D_{\psi}(w,w_i) +E_i(w_i,w_{i-1}) +\eta D_{L_i}(w,w_{i-1}) ,
\end{equation}
for all $i\geq 1$, where
\begin{equation}
E_i(w_i,w_{i-1}) := D_{\psi}(w_i,w_{i-1})-\eta D_{L_i}(w_i,w_{i-1})+\eta L_i(w_i) .
\end{equation}
\end{lemma}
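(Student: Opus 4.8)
The plan is to verify \eqref{eq:equality_SMD} by a direct algebraic computation that starts from the dual form of the update \eqref{eq:SMD} rather than from its primal (argmin) definition. In the square-loss case the analogous identity was obtained by squaring the residual decomposition and subtracting; here there is no such shortcut, because the geometry is now governed by two separate Bregman-type quantities, one built from the potential $\psi$ and one built from the loss $L_i$. The central tool will be a ``three-point'' identity for the Bregman divergence, which plays the role of the squaring-and-subtracting step in the linear case.

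First I would establish, purely from the definition of $D_{\psi}$ and with no appeal to convexity, the exact relation
\begin{equation}
D_{\psi}(w,w_{i-1}) - D_{\psi}(w,w_i) - D_{\psi}(w_i,w_{i-1}) = \left(\nabla\psi(w_i)-\nabla\psi(w_{i-1})\right)^T(w-w_i),
\end{equation}
which follows by writing out the three divergences, cancelling all the $\psi(\cdot)$ terms, and collecting the linear terms. Substituting the dual update $\nabla\psi(w_i)-\nabla\psi(w_{i-1}) = -\eta\nabla L_i(w_{i-1})$ converts the right-hand side into $-\eta\nabla L_i(w_{i-1})^T(w-w_i)$, so that the entire ``potential side'' of the target identity is expressed through the single inner product $\nabla L_i(w_{i-1})^T(w-w_i)$.

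The remaining step is to show that the ``loss side'' produces exactly this same inner product. Here I would use that, since $v_i = y_i - f(x_i,w)$, one has $l(v_i) = L_i(w)$, so $\eta l(v_i)$ is simply $\eta L_i(w)$. Expanding the definition \eqref{eq:D_Li} of $D_{L_i}$ at both arguments and subtracting gives
\begin{equation}
D_{L_i}(w,w_{i-1}) - D_{L_i}(w_i,w_{i-1}) = L_i(w) - L_i(w_i) - \nabla L_i(w_{i-1})^T(w-w_i),
\end{equation}
where the $L_i(w_{i-1})$ terms cancel. Substituting the definition of $E_i$ into \eqref{eq:equality_SMD}, dividing by $\eta>0$, and cancelling $D_{\psi}(w,w_i)$, $D_{\psi}(w_i,w_{i-1})$, $L_i(w)$, and $L_i(w_i)$ against each other then reduces the claim to the equality of the two inner products obtained above, which completes the verification.

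I do not expect a genuine obstacle: the statement is an exact algebraic consequence of the update rule, not an inequality requiring a step-size bound or a convexity hypothesis. The only delicate point is bookkeeping, and in particular resisting the temptation to use any non-negativity or convexity of $D_{L_i}$ (which fails for nonlinear $f$) and instead treating $D_{L_i}$ purely as the symbol defined in \eqref{eq:D_Li}. The conceptual content is that the cross-term $\nabla L_i(w_{i-1})^T(w-w_i)$ is generated identically from the potential side (via the update) and from the loss side (via the difference of two loss-divergences), and it is precisely this coincidence that the auxiliary quantity $E_i$ is constructed to absorb.
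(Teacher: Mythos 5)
Your proposal is correct and takes essentially the same route as the paper: the paper's opening computation---expanding $D_{\psi}(w,w_i)$, substituting the dual update $\nabla\psi(w_i)-\nabla\psi(w_{i-1})=-\eta\nabla L_i(w_{i-1})$, and regrouping via the definitions of $D_{\psi}(w,w_{i-1})$ and $D_{\psi}(w_i,w_{i-1})$---is precisely your three-point identity with the update substituted, and the paper likewise absorbs the resulting cross-term $\nabla L_i(w_{i-1})^T(w-w_i)$ by expanding $D_{L_i}$ at $(w,w_{i-1})$ and $(w_i,w_{i-1})$ and using $L_i(w)=l(v_i)$. The only difference is organizational (you isolate the three-point identity as a lemma and reduce the claim to matching two inner products, while the paper runs the same substitutions as one forward chain), and your remark that no convexity or step-size hypothesis is needed is consistent with the paper's proof.
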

The proof is provided in Appendix~\ref{sec:proof_of_equality_SMD}. Note that $E_i(w_i,w_{i-1})$ is not a function of $w$. Furthermore, even though it does not have to be nonnegative in general, for $\eta$ sufficiently small, it becomes nonnegative, because the Bregman divergence $D_{\psi}(.,.)$ is nonnegative.

Summing Equation~\eqref{eq:equality_SMD} over all $i=1,\dots,T$ leads to the following identity, which is the general counterpart of Lemma~\ref{lem:sum}.
\begin{lemma}\label{lem:sum_SMD}
For any (nonlinear) model $f(\cdot,\cdot)$, any differentiable loss $l(\cdot)$, any parameter $w$ and noise values $\{v_i\}$ that satisfy $y_i =f(x_i,w)+v_i$ for $i=1,\dots,n$, any initialization $w_0$, any step size $\eta>0$, and any number of steps $T\geq 1$, the following relation holds for the SMD iterates $\{w_i\}$ given in Eq.~\eqref{eq:SMD}
\begin{empheq}[box=\fbox]{equation}\label{eq:sum_SMD}
D_{\psi}(w,w_{0})+\eta \sum_{i=1}^T l(v_i) = D_{\psi}(w,w_T) +\sum_{i=1}^T \left( E_i(w_i,w_{i-1}) +\eta D_{L_i}(w,w_{i-1}) \right) .
\end{empheq}
\end{lemma}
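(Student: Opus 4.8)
The plan is to obtain \eqref{eq:sum_SMD} by summing the per-step identity \eqref{eq:equality_SMD} of Lemma~\ref{lem:equality_SMD} over $i=1,\dots,T$ and exploiting a telescoping cancellation, in exact parallel with how Lemma~\ref{lem:sum} was derived from Lemma~\ref{lem:equality} in the linear/square-loss warm-up. Since Lemma~\ref{lem:equality_SMD} is assumed, the content of this proof is purely the bookkeeping of the summation.

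\medskip
\textbf{Main steps.} First I would write down \eqref{eq:equality_SMD} for a generic index $i$,
\begin{equation*}
D_{\psi}(w,w_{i-1})+\eta\, l(v_i) = D_{\psi}(w,w_i) + E_i(w_i,w_{i-1}) + \eta\, D_{L_i}(w,w_{i-1}),
\end{equation*}
which holds for every $i\ge 1$ under the stated hypotheses on $f$, $l$, $w$, $\{v_i\}$, and $\eta>0$. Next I would sum both sides from $i=1$ to $T$. On the left the term $\eta\sum_{i=1}^T l(v_i)$ appears directly, while the terms $\sum_{i=1}^T D_{\psi}(w,w_{i-1})$ and $\sum_{i=1}^T D_{\psi}(w,w_i)$ (the latter migrated from the right-hand side) form a telescoping pair: writing the first sum as $D_{\psi}(w,w_0)+\sum_{i=1}^{T-1}D_{\psi}(w,w_i)$ and the second as $\sum_{i=1}^{T-1}D_{\psi}(w,w_i)+D_{\psi}(w,w_T)$, all intermediate terms cancel, leaving only the endpoints $D_{\psi}(w,w_0)$ and $D_{\psi}(w,w_T)$. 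The remaining terms $\sum_{i=1}^T E_i(w_i,w_{i-1})$ and $\eta\sum_{i=1}^T D_{L_i}(w,w_{i-1})$ carry through unchanged and group into the stated right-hand side $\sum_{i=1}^T\bigl(E_i(w_i,w_{i-1})+\eta D_{L_i}(w,w_{i-1})\bigr)$. Rearranging so that $D_{\psi}(w,w_0)$ and the loss sum sit on the left yields exactly \eqref{eq:sum_SMD}.

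\medskip
\textbf{What requires care.} There is no genuine analytic obstacle here, since the heavy lifting—the identity valid for each individual step—is already established in Lemma~\ref{lem:equality_SMD}, and that lemma imposes no convexity or sign conditions beyond differentiability, so the summation is legitimate term by term regardless of whether $E_i$ or $D_{L_i}$ is nonnegative. The only point demanding attention is to confirm that the telescoping is clean, i.e.\ that the $D_{\psi}(w,w_i)$ term on the right of the $i$-th identity is precisely the $D_{\psi}(w,w_{i-1})$ term on the left of the $(i+1)$-th identity; this is immediate because both refer to the same iterate $w_i$ evaluated against the same fixed $w$. I would also note explicitly that the identity holds for arbitrary fixed $w$ and arbitrary $T\ge 1$, so that the summed version inherits the same universal quantification, and that the cancellation does \emph{not} require $\{w_i\}\to\mathcal{W}$ or any convergence hypothesis—\eqref{eq:sum_SMD} is a purely algebraic conservation law. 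The resulting identity is the boxed ``fundamental identity'' for SMD that Section~\ref{sec:implications} then leverages.
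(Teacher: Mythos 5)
Your proposal matches the paper exactly: Lemma~\ref{lem:sum_SMD} is obtained there by summing the per-step identity of Lemma~\ref{lem:equality_SMD} over $i=1,\dots,T$, with the $D_{\psi}(w,w_i)$ and $D_{\psi}(w,w_{i-1})$ terms cancelling telescopically, just as you describe. Your added remarks---that no convexity or sign conditions are needed and that the identity is purely algebraic, requiring no convergence hypothesis---are accurate and consistent with the paper's treatment.
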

We should reiterate that Lemma~\ref{lem:sum_SMD} is a fundamental property of SMD, which allows one to prove many important results, in a direct way.

In particular, in this setting, we can show that SMD is minimax optimal in a manner that generalizes Theorem~\ref{thm:SGDminimax} of Section~\ref{sec:warmup}, in the following 3 ways: 1) General potential $\psi(\cdot)$, 2) General model $f(\cdot,\cdot)$, and 3) General loss function $l(\cdot)$. The result is as follows. 
\begin{theorem}\label{thm:SMD}
Consider any (nonlinear) model $f(\cdot,\cdot)$, any non-negative differentiable loss $l(\cdot)$ with the property $l(0)=l'(0)=0$, and any initialization $w_0$. For sufficiently small step size, i.e., for any $\eta>0$ for which $\psi(w)-\eta L_i(w)$ is convex for all $i$, and for any number of steps $T\geq 1$, the SMD iterates $\{w_i\}$ given by Eq.~\eqref{eq:SMD}, w.r.t. any strictly convex potential $\psi(\cdot)$, is the optimal solution to the following minimization problem
\begin{equation}
\min_{\{w_i\}}\max_{w,\{v_i\}}\frac{D_{\psi}(w,w_T)+\eta\sum_{i=1}^{T}D_{L_i}(w,w_{i-1})}{D_{\psi}(w,w_0)+\eta\sum_{i=1}^{T}l(v_i)} .
\end{equation}
Furthermore, the optimal value (achieved by SMD) is $1$.
\end{theorem}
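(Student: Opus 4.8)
The plan is to reduce everything to the fundamental identity of Lemma~\ref{lem:sum_SMD}. Writing $\mathcal{N}:=D_{\psi}(w,w_T)+\eta\sum_{i=1}^{T}D_{L_i}(w,w_{i-1})$ for the numerator and $\mathcal{D}:=D_{\psi}(w,w_0)+\eta\sum_{i=1}^{T}l(v_i)$ for the denominator, that lemma says precisely $\mathcal{D}=\mathcal{N}+\sum_{i=1}^{T}E_i(w_i,w_{i-1})$, so for the SMD iterates the cost ratio rewrites, for every consistent $(w,\{v_i\})$, as
\[
\frac{\mathcal{N}}{\mathcal{D}}=1-\frac{\sum_{i=1}^{T}E_i(w_i,w_{i-1})}{\mathcal{D}} .
\]
Since $l(\cdot)\ge 0$ and $D_{\psi}(w,w_0)\ge 0$, the denominator $\mathcal{D}$ is nonnegative, so the whole argument hinges on the sign of $\sum_i E_i$.

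First I would establish the upper bound, namely that SMD guarantees value at most $1$. The key observation is that the first two terms of $E_i(w_i,w_{i-1})=D_{\psi}(w_i,w_{i-1})-\eta D_{L_i}(w_i,w_{i-1})+\eta L_i(w_i)$ combine into a single divergence: by linearity of the divergence construction in its generating function, $D_{\psi}(w_i,w_{i-1})-\eta D_{L_i}(w_i,w_{i-1})=D_{\psi-\eta L_i}(w_i,w_{i-1})$. Under the stated step-size condition that $\psi-\eta L_i$ is convex for every $i$, this divergence is nonnegative; since $\eta L_i(w_i)\ge 0$ by nonnegativity of the loss, each $E_i\ge 0$ and hence $\sum_i E_i\ge 0$. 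The displayed identity then gives $\mathcal{N}/\mathcal{D}\le 1$ for every admissible $(w,\{v_i\})$ with $\mathcal{D}>0$, so $\max_{w,\{v_i\}}\mathcal{N}/\mathcal{D}\le 1$ for the SMD strategy, and the minimax value is at most $1$.

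Next I would show the bound is tight, attained with value exactly $1$. Here I exploit the hypotheses $l(0)=l'(0)=0$: if nature feeds data consistent with the initialization, $y_i=f(x_i,w_0)$, then every residual vanishes, so $\nabla L_i(w_{i-1})=0$ and the recursion~\eqref{eq:SMD} freezes, giving $w_i=w_0$ for all $i$. Direct substitution then yields $E_i=0$ for each $i$, since $D_{\psi}(w_0,w_0)$, $D_{L_i}(w_0,w_0)$ and $L_i(w_0)=l(0)$ all vanish. Choosing any true parameter $w\neq w_0$ and letting the noise absorb the discrepancy, $v_i:=f(x_i,w_0)-f(x_i,w)$, leaves these observations (hence the frozen iterates) unchanged while making $\mathcal{D}=D_{\psi}(w,w_0)+\eta\sum_i l(v_i)>0$. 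With $\sum_i E_i=0$ the identity forces $\mathcal{N}=\mathcal{D}$, so the ratio is exactly $1$; thus the supremum over $(w,\{v_i\})$ for SMD is $1$, not smaller.

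The main obstacle is the converse optimality direction: that no causal estimator can force the worst-case ratio strictly below $1$, which is what actually pins SMD down as the minimizer. The approach I would take is an adversarial/indistinguishability argument exploiting that nature may decouple the observations from the true parameter: given any output sequence $\{y_i\}$ (which alone determines an arbitrary estimator's iterates $\{\tilde w_i\}$) and any $w$, the choice $v_i:=y_i-f(x_i,w)$ realizes both simultaneously. One then argues that, for a fixed estimator, varying $w$ within the resulting family of consistent disturbances must push the ratio to at least $1$; intuitively a causal rule cannot have $\tilde w_T$ track every possible $w$ while keeping the accumulated terms $D_{L_i}(w,\tilde w_{i-1})$ small relative to $D_{\psi}(w,w_0)$. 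The delicate points, and where the real work lies, are controlling the possibly-negative $D_{L_i}$ terms for nonlinear $f$ and handling the degenerate $0/0$ configurations (e.g.\ data consistent with $w_0$) through a careful limit as $w\to w_0$; this is the analogue of the classical $H^{\infty}$ lower bound and is the step I expect to be hardest to make fully rigorous.
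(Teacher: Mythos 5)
Your upper-bound half coincides with the paper's: under the step-size condition, $D_{\psi}(w_i,w_{i-1})-\eta D_{L_i}(w_i,w_{i-1})=D_{\psi-\eta L_i}(w_i,w_{i-1})\geq 0$ and $\eta L_i(w_i)\geq 0$, so each $E_i\geq 0$ and Lemma~\ref{lem:sum_SMD} gives $\mathcal{N}/\mathcal{D}\leq 1$ for the SMD iterates against every consistent $(w,\{v_i\})$. But the proposal has a genuine gap precisely where you flag it: the converse, that \emph{no} causal estimator can force the worst-case ratio below $1$, is never proved---you only sketch an ``indistinguishability'' plan and defer the real work. This step is not optional. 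Your frozen-iterate construction (data consistent with $w_0$, noise $v_i=f(x_i,w_0)-f(x_i,w)$) is correct, but it only shows that SMD's own worst-case value equals $1$; it leaves open the possibility that some other estimator achieves worst-case ratio strictly less than $1$, in which case SMD would not solve the minimax problem. As written, you have proved ``SMD guarantees exactly $1$,'' not Theorem~\ref{thm:SMD}.

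The irony is that the residual-zeroing trick you invented for SMD is exactly how the paper closes the converse against an \emph{arbitrary} estimator, with no tracking argument and none of the difficulties you anticipate. Given any estimator's iterates $\{w_i\}$, the adversary picks $\hat{v}_i=f_i(w_{i-1})-f_i(\hat{w})$, so that $y_i=f_i(w_{i-1})$ and every observed residual vanishes; since $l(0)=l'(0)=0$, this collapses $D_{L_i}(\hat{w},w_{i-1})$ to exactly $l(\hat{v}_i)$ term by term, so the possibly negative $D_{L_i}$ terms for nonlinear $f$ need no controlling at all---they are equalized with the denominator by construction. The endpoint terms are then matched by choosing $\hat{w}$ with $D_{\psi}(\hat{w},w_T)=D_{\psi}(\hat{w},w_0)$, i.e., solving the single scalar linear equation
\begin{equation*}
\left(\nabla\psi(w_{T})-\nabla\psi(w_{0})\right)^T w = -\psi(w_{T}) +\psi(w_{0}) +\nabla\psi(w_{T})^T w_{T} -\nabla\psi(w_{0})^T w_{0},
\end{equation*}
which is solvable whenever $w_T\neq w_0$ because strict convexity makes $\nabla\psi$ injective, hence $\nabla\psi(w_T)-\nabla\psi(w_0)\neq 0$; if $w_T=w_0$, any $\hat{w}\neq w_0$ gives the equality and avoids the $0/0$ degeneracy you worried about. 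This forces the ratio to equal $1$ against \emph{every} estimator, so the minimax value is at least $1$, and combined with your upper bound (and subsuming your attainment step, which becomes redundant) this yields the theorem. Your proposed limit $w\to w_0$ and worst-case tracking analysis are unnecessary and would be considerably harder to make rigorous for general nonlinear $f$.
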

The proof is provided in Appendix~\ref{sec:proof_of_thm_SMD}.
For the case of square loss and a linear model, the result reduces to the following form.
\begin{corollary}\label{cor:SMD_squareloss_linear}
For $L_i(w)=\frac{1}{2}(y_i-x_i^Tw)^2$, for any initialization $w_0$, any sufficiently small step size, i.e., $0<\eta\leq\frac{\alpha}{\|x_i\|^2}$, and any number of steps $T\geq 1$, the SMD iterates $\{w_i\}$ given by Eq.~\eqref{eq:SMD}, w.r.t. any $\alpha$-strongly convex potential $\psi(\cdot)$, is the optimal solution to
\begin{equation}
\min_{\{w_i\}}\max_{w,\{v_i\}}\frac{D_{\psi}(w,w_T)+\frac{\eta}{2}\sum_{i=1}^{T}e_{p,i}^2}{D_{\psi}(w,w_0)+\frac{\eta}{2}\sum_{i=1}^{T} v_i^2} .
\end{equation}
The optimal value (achieved by SMD) is $1$.
\end{corollary}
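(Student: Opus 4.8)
The plan is to obtain Corollary~\ref{cor:SMD_squareloss_linear} as a direct specialization of Theorem~\ref{thm:SMD}, so the real work reduces to checking that the hypotheses of that theorem hold in the square-loss/linear-model setting and that the general minimax objective collapses to the stated ratio. First I would verify the conditions on the per-sample loss: here $l(r)=\frac{1}{2}r^2$ is non-negative, differentiable, and satisfies $l(0)=0$ and $l'(0)=0$, so the loss hypothesis of Theorem~\ref{thm:SMD} is met.

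Next I would identify the two quantities appearing in the general objective. For the linear model $f(x_i,w)=x_i^Tw$ we have $\nabla L_i(w)=-(y_i-x_i^Tw)x_i$; substituting this into the definition~\eqref{eq:D_Li} of $D_{L_i}$ and simplifying gives $D_{L_i}(w,w')=\frac{1}{2}(x_i^Tw-x_i^Tw')^2$. Taking $w'=w_{i-1}$ yields $D_{L_i}(w,w_{i-1})=\frac{1}{2}e_{p,i}^2$, so the numerator term $\eta\sum_{i=1}^T D_{L_i}(w,w_{i-1})$ becomes $\frac{\eta}{2}\sum_{i=1}^T e_{p,i}^2$. Likewise $l(v_i)=\frac{1}{2}v_i^2$ turns the denominator term $\eta\sum_{i=1}^T l(v_i)$ into $\frac{\eta}{2}\sum_{i=1}^T v_i^2$. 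Hence the general objective of Theorem~\ref{thm:SMD} coincides exactly with the ratio in the corollary, and the claimed optimal value $1$ is inherited immediately.

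The one step that requires genuine (if short) argument is translating the abstract sufficient condition of Theorem~\ref{thm:SMD}---that $\psi(w)-\eta L_i(w)$ be convex for every $i$---into the explicit step-size bound $0<\eta\leq\frac{\alpha}{\|x_i\|^2}$. I would decompose $\psi-\eta L_i = \left(\psi-\tfrac{\alpha}{2}\|\cdot\|^2\right)+\left(\tfrac{\alpha}{2}\|\cdot\|^2-\eta L_i\right)$. The first summand is convex because $\psi$ is $\alpha$-strongly convex. The second summand is a quadratic whose Hessian is $\alpha I-\eta x_ix_i^T$; since $x_ix_i^T\preceq\|x_i\|^2 I$, this Hessian satisfies $\alpha I-\eta x_ix_i^T\succeq(\alpha-\eta\|x_i\|^2)I\succeq 0$ precisely when $\eta\|x_i\|^2\leq\alpha$, i.e. $\eta\leq\frac{\alpha}{\|x_i\|^2}$. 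Thus under the stated bound (imposed for all $i$), $\psi-\eta L_i$ is a sum of two convex functions and hence convex, so Theorem~\ref{thm:SMD} applies directly and delivers both the minimax optimality of the SMD iterates and the optimal value $1$.

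I expect the only delicate point to be this convexity reduction: phrasing it through the decomposition above (rather than through $\nabla^2\psi$) keeps the argument valid even when the potential $\psi$ is not assumed twice differentiable, while still pinning the positive-semidefiniteness condition $\alpha I\succeq\eta x_ix_i^T$ to exactly the threshold $\eta=\alpha/\|x_i\|^2$. Everything else is routine substitution, so no separate treatment of the minimax structure is needed beyond the appeal to Theorem~\ref{thm:SMD}.
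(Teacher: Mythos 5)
Your proposal is correct and follows essentially the same route as the paper: Corollary~\ref{cor:SMD_squareloss_linear} is obtained by specializing Theorem~\ref{thm:SMD} to $l(r)=\frac{1}{2}r^2$ and $f(x_i,w)=x_i^Tw$, so that $D_{L_i}(w,w_{i-1})=\frac{1}{2}e_{p,i}^2$ and $l(v_i)=\frac{1}{2}v_i^2$, exactly as the paper does for its special case $\psi(w)=\frac{1}{2}\|w\|^2$. Your explicit check that $0<\eta\leq\frac{\alpha}{\|x_i\|^2}$ (for all $i$) implies convexity of $\psi-\eta L_i$, via the decomposition into $\psi-\frac{\alpha}{2}\|\cdot\|^2$ plus a quadratic with Hessian $\alpha I-\eta x_ix_i^T\succeq(\alpha-\eta\|x_i\|^2)I$, correctly fills in the one step the paper leaves implicit, and does so without needing $\psi$ to be twice differentiable.
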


We should remark that Theorem~\ref{thm:SMD} and Corollary~\ref{cor:SMD_squareloss_linear} generalize several known results in the literature. In particular, as mentioned in Section~\ref{sec:warmup}, the result of \citep{hassibi1994hoo} is a special case of Corollary~\ref{cor:SMD_squareloss_linear} for $\psi(w)=\frac{1}{2}\|w\|^2$. Furthermore, our result generalizes the result of \citep{kivinen2006p}, which is the special case for the $p$-norm algorithms, again, with square loss and a linear model. Another interesting connection to the literature is that it was shown in \citep{hassibi1995h} that SGD is \emph{locally} minimax optimal, with respect to the $H^{\infty}$ norm. Strictly speaking, our result is not a generalization of that result; however, Theorem~\ref{thm:SMD} can be interpreted as SGD/SMD being \emph{globally} minimax optimal, but with respect to different metrics in the numerator and denominator. Namely, the uncertainty about the weight vector $w$ is measured by the Bregman divergence of the potential, the uncertainty about the noise by the loss, and the prediction error by the ``Bregman-divergence-like'' expression of the loss.

\section{Convergence and Implicit Regularization in Over-Parameterized Models}\label{sec:implications}

In this section, we show some of the implications of the theory developed in the previous section. In particular, we show convergence and implicit regularization, in the over-parameterized (so-called interpolating) regime, for general SMD algorithms. We first consider the linear interpolating case, which has been studied in the literature, and show that the known results follow naturally from our Lemma~\ref{lem:sum_SMD}. Further, we shall obtain some {\em new} convergence results. Finally, we discuss the implications for nonlinear models, and argue that the same results hold {\em qualitatively} in highly-overparameterized settings, which is the typical scenario in deep learning.

\subsection{Over-Parameterized Linear Models}

In this setting, the $v_i$ are zero, $\mathcal{W} = \left\{w\mid y_i=x_i^Tw,\ i=1,\dots,n\right\}$, and $L_i(w)=l(y_i-x_i^Tw)$, with any differentiable loss $l(\cdot)$. Therefore, Eq.~\eqref{eq:sum_SMD} reduces to 
\begin{equation}\label{eq:sum_SMD_noiseless}
D_{\psi}(w,w_{0}) = D_{\psi}(w,w_T) +\sum_{i=1}^T \left( E_i(w_i,w_{i-1}) +\eta D_{L_i}(w,w_{i-1}) \right) ,
\end{equation}
for all $w\in\mathcal{W}$, where
\begin{align}
D_{L_i}(w,w_{i-1}) &=L_i(w)-L_i(w_{i-1})-\nabla L_i(w_{i-1})^T(w-w_{i-1})\\
&= 0-l(y_i-x_i^Tw_{i-1})+l'(y_i-x_i^Tw_{i-1})x_i^T(w-w_{i-1})\\
&= -l(y_i-x_i^Tw_{i-1})+l'(y_i-x_i^Tw_{i-1})(y_i-x_i^Tw_{i-1})
\end{align}
which is notably \emph{independent of $w$}. As a result, we can easily minimize both sides of Eq.~\eqref{eq:sum_SMD_noiseless} with respect to $w\in\mathcal{W}$, which 
leads to the following result.
\begin{proposition}\label{prop:ImpReg_SMD}
For any differentiable loss $l(\cdot)$, any initialization $w_0$, and any step size $\eta$, consider the SMD iterates given in Eq.~\eqref{eq:SMD} with respect to any strictly convex potential $\psi(\cdot)$. If the iterates converge to a solution $w_{\infty}\in\mathcal{W}$, then
\begin{equation}
w_{\infty}=\argmin_{w\in\mathcal{W}} D_{\psi}(w,w_0) .
\end{equation}
\end{proposition}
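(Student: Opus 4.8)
The plan is to run the fundamental identity \eqref{eq:sum_SMD_noiseless} through the limit $T\to\infty$ and exploit the structural fact, already noted in the excerpt, that in the interpolating regime both $E_i(w_i,w_{i-1})$ and $D_{L_i}(w,w_{i-1})$ are \emph{independent of $w$}; everything else is bookkeeping. The whole point is that the two sides of the identity differ, as functions of $w$, only through the terms $D_\psi(w,w_0)$ and $D_\psi(w,w_T)$. Concretely, I would rewrite \eqref{eq:sum_SMD_noiseless} as
\begin{equation}
D_\psi(w,w_0) - D_\psi(w,w_T) = \sum_{i=1}^T\left(E_i(w_i,w_{i-1}) + \eta D_{L_i}(w,w_{i-1})\right) =: C_T ,
\end{equation}
valid for every $w\in\mathcal{W}$, where the right-hand side $C_T$ does not depend on $w$.

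Next I would let $T\to\infty$. By hypothesis the iterates converge, $w_T\to w_\infty$; since $\psi$ is a differentiable convex function its gradient is continuous, so $D_\psi(w,\cdot)$ is continuous and $D_\psi(w,w_T)\to D_\psi(w,w_\infty)$ for each fixed $w\in\mathcal{W}$. Hence the $w$-free quantities $C_T$ converge to a finite limit $C_\infty$, again independent of $w$, and the identity passes to the limit as
\begin{equation}
D_\psi(w,w_0) = D_\psi(w,w_\infty) + C_\infty \qquad \text{for all } w\in\mathcal{W} .
\end{equation}

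Finally, since $C_\infty$ is a constant, minimizing the left-hand side over $w\in\mathcal{W}$ is equivalent to minimizing $D_\psi(w,w_\infty)$ over $w\in\mathcal{W}$, so the two problems share the same minimizers. But $w_\infty\in\mathcal{W}$, and strict convexity of $\psi$ gives $D_\psi(w,w_\infty)\geq 0$ with equality iff $w=w_\infty$; thus $D_\psi(\cdot,w_\infty)$ is uniquely minimized over $\mathcal{W}$ at $w=w_\infty$. Combining the two gives $\argmin_{w\in\mathcal{W}} D_\psi(w,w_0) = w_\infty$, which is the claim.

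There is no real obstacle once the $w$-independence of the summed terms is in hand, and that is precisely what the interpolation constraint $x_i^Tw=y_i$ supplies (it forces $L_i(w)=0$ and collapses $D_{L_i}(w,w_{i-1})$ to a quantity depending only on $x_i,y_i,w_{i-1}$). The only analytic point to check is the limit interchange, i.e.\ $D_\psi(w,w_T)\to D_\psi(w,w_\infty)$ and the attendant convergence of the series $\sum_i(E_i+\eta D_{L_i})$; both are immediate from the assumed convergence of the iterates together with continuity of $\nabla\psi$. This is also why the statement is conditional: the substantive work is establishing convergence in the first place, whereas \emph{identifying} the limit as the minimum-Bregman-divergence point is the easy, identity-driven half.
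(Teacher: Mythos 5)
Your proposal is correct and takes essentially the same route as the paper: both specialize the fundamental identity \eqref{eq:sum_SMD_noiseless} to $w\in\mathcal{W}$, exploit the $w$-independence of $E_i(w_i,w_{i-1})$ and of $D_{L_i}(w,w_{i-1})$ (the latter via the interpolation constraint $x_i^Tw=y_i$), and then minimize both sides over $\mathcal{W}$ after sending $T\to\infty$, with $D_\psi(w,w_\infty)$ uniquely minimized at $w_\infty$ by strict convexity. The only difference is presentational: you spell out the limit interchange $D_\psi(w,w_T)\to D_\psi(w,w_\infty)$ via continuity of $\nabla\psi$ and the convergence of the $w$-free constants $C_T$, which the paper leaves implicit.
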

\begin{remark}
In particular, for the initialization $w_0=\argmin_{w\in\mathbb{R}^m} \psi(w)$, if the iterates converge to a solution $w_{\infty}\in\mathcal{W}$, then
\begin{equation}\label{eq:minimum_potential}
w_{\infty}=\argmin_{w\in\mathcal{W}} \psi(w) .
\end{equation}
\end{remark}
An equivalent form of Proposition~\ref{prop:ImpReg_SMD} has been shown recently in, e.g., \citep{gunasekar2018characterizing}.\footnote{To be precise, the authors in \citep{gunasekar2018characterizing} assume convergence to a global minimizer of the loss function $L(w)=\sum_{i=1}^n l(y_i-x_i^Tw)$, which with their assumption of the loss function $l(\cdot)$ having a unique finite root is equivalent to assuming convergence to a point $w_{\infty}\in\mathcal{W}$.} Other implicit regularization results have been shown in \citep{gunasekar2018implicit,soudry2017implicit} for classification problems, which are not discussed here.
Note that the result of \citep{gunasekar2018characterizing} does not say anything about \emph{whether the algorithm converges or not}. However, our fundamental identity of SMD (Lemma~\ref{lem:sum_SMD}) allows us to also establish convergence to the regularized point, for some common cases, which will be shown next.

What Proposition~\ref{prop:ImpReg_SMD} says is that depending on the choice of the potential function $\psi(\cdot)$, the optimization algorithm can perform an implicit regularization without any explicit regularization term. In other words, for any desired regularizer, if one chooses a potential function that approximates the regularizer, we can run the optimization without explicit regularization, and if it converges to a solution, the solution must be the one with the minimum potential.

In principle, one can choose the potential function in SMD for {\em any} desired convex regularization.  For example, we can find the maximum entropy solution by taking the potential to be the negative entropy. Another illustrative example follows.

\noindent {\bf Example [Compressed Sensing]:} In compressed sensing, one seeks the sparsest solution to an under-determined (over-parameterized) system of linear equations. The surrogate convex problem one solves is:
\begin{equation}
\begin{array} {cl} \min & \|w\|_1 \\ \mbox{subject to} & y_i = x_i^Tw,~~~i=1,\ldots n \end{array}
\end{equation}
One cannot choose $\psi(w) = \|w\|_1$, since it is neither differentiable nor strictly convex. However, $\psi(w) = \|w\|_{1+\epsilon}$, for any $\epsilon>0$, can be used. Figure 4 shows a  compressed sensing example, with $n=50$, $m=100$, and sparsity $k=10$. SMD was used with a step size of $\eta = 0.001$ and the potential function was $\psi(\cdot) = \|\cdot\|_{1.1}$. SMD converged to the true sparse solution after around 10,000 iterations. On this example, it was an order of magnitude faster than standard $l_1$ optimization. 

\begin{figure}[thbp]
\begin{center}
\includegraphics[scale=.4]{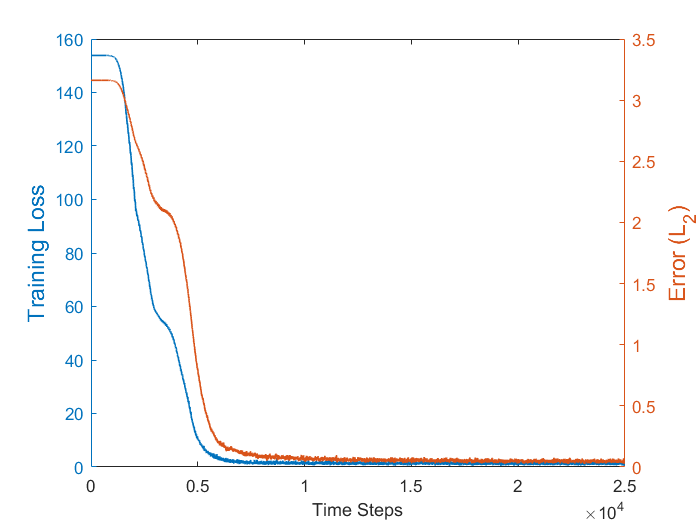}  
\end{center}
\label{fig:cs}
\caption{The training loss and actual error of stochastic mirror descent for compressed sensing. SMD recovers the actual sparse signal.}
\end{figure}

Next we establish \emph{convergence to the regularized point} for the convex case.


\begin{proposition}\label{prop:convergence_SMD}
Consider the following two cases.
\begin{enumerate}[label=(\roman*)]
\item $l(\cdot)$ is differentiable and convex and has a unique root at 0, $\psi(\cdot)$ is strictly convex, and $\eta>0$ is such that $\psi-\eta L_i$ is convex for all $i$.
\item $l(\cdot)$ is differentiable and quasi-convex, $l'(\cdot)$ is zero only at zero, $\psi(\cdot)$ is $\alpha$-strongly convex, and $0<\eta\leq\min_i\frac{\alpha |y_i-x_i^Tw_{i-1}|}{\|x_i\|^2|l'(y_i-x_i^Tw_{i-1})|}$.
\end{enumerate}
If either (i) or (ii) holds, then for any $w_0$, the SMD iterates given in Eq.~\eqref{eq:SMD} converge to
\begin{equation}
w_{\infty}=\argmin_{w\in\mathcal{W}} D_{\psi}(w,w_0) .
\end{equation}
\end{proposition}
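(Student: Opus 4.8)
The plan is to reduce everything to an application of Proposition~\ref{prop:ImpReg_SMD}: that proposition already identifies the limit, \emph{provided} the iterates converge to some point of $\mathcal{W}$, so the entire task is to prove that the SMD iterates do converge into $\mathcal{W}$. The engine will be the per-step noiseless identity from Lemma~\ref{lem:equality_SMD}, namely $D_{\psi}(w,w_{i-1})-D_{\psi}(w,w_i)=E_i(w_i,w_{i-1})+\eta D_{L_i}(w,w_{i-1})$ for every $w\in\mathcal{W}$, together with the observation (already made in the linear interpolating computation preceding Proposition~\ref{prop:ImpReg_SMD}) that its right-hand side is \emph{independent of} $w\in\mathcal{W}$. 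The strategy is to show this right-hand side is nonnegative, so that $\{D_{\psi}(w,w_i)\}_i$ is nonincreasing \emph{simultaneously for every} $w\in\mathcal{W}$ (a Fej\'er-monotonicity property), and then to extract convergence from this.

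First I would establish the per-step decrease. In case (i) I would argue the two summands are separately nonnegative: writing $\phi_i:=\psi-\eta L_i$, a short manipulation gives $E_i(w_i,w_{i-1})=D_{\phi_i}(w_i,w_{i-1})+\eta L_i(w_i)$, which is $\geq 0$ because $\phi_i$ is convex (the hypothesis on $\eta$) and $L_i\geq 0$; and $D_{L_i}(w,w_{i-1})$ reduces, as computed before the proposition, to $l'(r_i)r_i-l(r_i)$ with $r_i:=y_i-x_i^Tw_{i-1}$, which is exactly the Bregman divergence of the scalar convex loss $l$ and hence nonnegative. In case (ii) the loss need not be convex, so instead I would combine the two terms: using $x_i^Tw=y_i$ for $w\in\mathcal{W}$ one finds $E_i(w_i,w_{i-1})+\eta D_{L_i}(w,w_{i-1})=D_{\psi}(w_i,w_{i-1})+\eta\,l'(r_i)\,(y_i-x_i^Tw_i)$, and then bound this below using $\alpha$-strong convexity of $\psi$, which controls both $D_{\psi}(w_i,w_{i-1})$ and $\|w_i-w_{i-1}\|$ through the dual update $\nabla\psi(w_i)-\nabla\psi(w_{i-1})=\eta l'(r_i)x_i$; the stated step-size bound $\eta\leq\alpha|r_i|/(\|x_i\|^2|l'(r_i)|)$ is exactly what makes the resulting quadratic-in-$\|w_i-w_{i-1}\|$ estimate nonnegative, in fact at least $\tfrac{\eta}{2}|r_i|\,|l'(r_i)|\geq 0$.

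Given nonnegativity, summing the per-step identity and using $D_{\psi}(w,w_i)\geq 0$ shows $\{D_{\psi}(w,w_i)\}$ is nonincreasing and bounded, so it converges and the per-step increments are summable, hence tend to $0$. From this I would extract two facts. The residuals vanish, $r_i\to 0$: in case (ii) directly from the $\tfrac{\eta}{2}|r_i||l'(r_i)|$ lower bound together with ``$l'$ vanishes only at $0$''; in case (i) from $E_i\to 0$, which forces $L_i(w_i)=l(y_i-x_i^Tw_i)\to 0$, combined with the fact that a convex loss with a unique root at $0$ is coercive, so vanishing loss forces vanishing residual. Likewise the steps vanish, $\|w_i-w_{i-1}\|\to 0$, via the dual update $\nabla\psi(w_i)-\nabla\psi(w_{i-1})=\eta l'(r_i)x_i\to 0$ and the locally uniform continuity of $(\nabla\psi)^{-1}$ guaranteed by strong/strict convexity. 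Boundedness of $\{w_i\}$ follows from boundedness of $D_{\psi}(w,w_i)$ and coercivity of $D_{\psi}(w,\cdot)$.

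Finally I would upgrade these to genuine convergence. Because $\{w_i\}$ is bounded it has a cluster point $w_\infty$; using $r_i\to 0$, $\|w_i-w_{i-1}\|\to 0$, and the cyclic indexing of the data, every data point is fit in the limit, so $w_\infty\in\mathcal{W}$. Now apply the Fej\'er property with the specific choice $w=w_\infty$: the nonincreasing sequence $D_{\psi}(w_\infty,w_i)$ has a subsequence tending to $D_{\psi}(w_\infty,w_\infty)=0$, hence the whole sequence tends to $0$; since $D_{\psi}(w_\infty,\cdot)=0$ only at $w_\infty$ (strict convexity), this yields $w_i\to w_\infty$, and Proposition~\ref{prop:ImpReg_SMD} identifies $w_\infty=\argmin_{w\in\mathcal{W}}D_{\psi}(w,w_0)$. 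I expect the main obstacle to be exactly this last passage---turning ``$D_{\psi}(w,w_i)$ is monotone for every $w\in\mathcal{W}$ and some cluster point lies in $\mathcal{W}$'' into convergence of the \emph{entire} sequence---together with the supporting step of converting the vanishing of the divergence-like quantities into honest $r_i\to 0$ and $\|w_i-w_{i-1}\|\to 0$, where the convexity/unique-root hypothesis in (i) and the precise step-size bound in (ii) are indispensable.
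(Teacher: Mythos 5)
Your proposal follows essentially the same route as the paper's own proof: the same per-step identity, the same nonnegativity arguments (in case (i), $E_i\geq 0$ from convexity of $\psi-\eta L_i$ together with $D_{L_i}(w,w_{i-1})\geq 0$ from convexity of $l$; in case (ii), the combined form $D_{\psi}(w_i,w_{i-1})+\eta\, l'(r_i)(y_i-x_i^Tw_i)$ controlled via $\alpha$-strong convexity of $\psi$ and exactly the stated step-size bound), followed by identification of the limit through Proposition~\ref{prop:ImpReg_SMD}. If anything, your version is tighter than the paper's: the quantitative lower bound $\tfrac{\eta}{2}|r_i|\,|l'(r_i)|$ is correct and sharper than the paper's sign-agreement argument, and your Fej\'er-monotonicity endgame (bounded iterates, a cluster point in $\mathcal{W}$, then $D_{\psi}(w_\infty,w_i)\to 0$ forcing convergence of the whole sequence) rigorously fills in the step that the paper states only tersely, namely the passage from ``the summands vanish'' to ``the iterates converge to a point of $\mathcal{W}$.''
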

The proof is provided in Appendix~\ref{sec:proof_of_prop_convergence_SMD}.

\subsection{Discussion of Highly Over-Parameterized Nonlinear Models}

Let us consider the highly-overparameterized nonlinear model 
\begin{equation}
y_i = f(x_i,w),~~~i=1,\ldots,n,~~~~w\in \mathbb{R}^m
\end{equation}
where by highly-overparameterized we mean $m\gg n$. Since the model is highly over-parameterized, it is assumed that we can perfectly interpolate the data points $(x_i,y_i)$ so that the noise $v_i$ is zero. In this case, the set of parameter vectors that interpolate the data is given by
$\mathcal{W} = \{w \in \mathbb{R}^m\mid y_i = f(x_i,w),\ i=1,\ldots, n\}$,
and Eq.~\eqref{eq:sum_SMD}, again, reduces to
\begin{equation}
D_{\psi}(w,w_{0}) = D_{\psi}(w,w_T) +\sum_{i=1}^T \left( E_i(w_i,w_{i-1}) +\eta D_{L_i}(w,w_{i-1}) \right) ,
\label{eq:nonlinearnov}
\end{equation}
for all $w\in\mathcal{W}$. Our proofs of convergence and implicit regularization for SGD and SMD in the linear case relied on two facts: (i) $D_{L_i}(w,w_{i-1})$ was non-negative (this allowed us to show convergence), and (ii)  $D_{L_i}(w,w_{i-1})$ was independent of $w$ (this allowed us to show implicit regularization). Unfortunately, neither of these hold in the nonlinear case. 

However, they do hold in a {\em local} sense. In other words, (i) $D_{L_i}(w,w_{i-1})\geq 0$ for $w_{i-1}$ ``close enough'' to $w$ (see Figure~\ref{fig:local}), and (ii) $D_{L_i}(w,w_{i-1})$ is weakly dependent on $w$ for $w_{i-1}$ ``close enough.''  (Both statements can be made precise.)
\begin{figure}[htbp]
\begin{center}
\includegraphics[scale=.3]{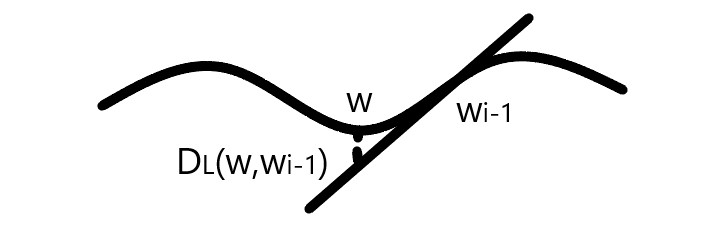}  
\end{center}
\caption{Non-negativity of $D_{L_i}(w,w_{i-1})$ for $w_{i-1}$ ``close enough'' to $w$.}
\label{fig:local}
\end{figure}

Now define
\begin{equation}
w_* = \mbox{arg}\min_{w\in{\cal W}}D_\psi(w,w_0) .
\end{equation}
Then one can show the following result.
\begin{theorem}
There exists an $\epsilon>0$, such that if $\|w_*-w_0\|<\epsilon$, then for sufficiently small step size $\eta>0$:
\begin{enumerate}
\item SMD iterates converge to a point $w_\infty\in{\cal W}$
\item $\|w_\infty-w_*\| = o(\epsilon)$
\end{enumerate}
\end{theorem}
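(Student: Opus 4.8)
The plan is to run the fundamental identity \eqref{eq:nonlinearnov}, evaluated at the regularized point $w=w_*$, as a Lyapunov relation, after localizing the two facts that carried the linear argument: the non-negativity of $D_{L_i}(w_*,w_{i-1})$ (Figure~\ref{fig:local}) and the near-independence of $D_{L_i}(w,w_{i-1})$ on $w$. Because $\psi$ is $\beta$-smooth and $\alpha$-strongly convex, the hypothesis $\|w_*-w_0\|<\epsilon$ supplies a total budget $D_{\psi}(w_*,w_0)\le\tfrac{\beta}{2}\epsilon^2=O(\epsilon^2)$, and the whole proof consists of spending this $O(\epsilon^2)$ to confine the trajectory to a neighborhood of $w_*$ on which the local facts hold. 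Throughout write $g:=\nabla\psi$ and recall from \eqref{eq:SMD} that $g(w_i)=g(w_{i-1})-\eta\nabla L_i(w_{i-1})$.

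\emph{Part 1 (convergence).} First I would fix a radius $r$ so that, on $B=\{w:\|w-w_*\|\le r\}$, the step-size condition makes $\psi-\eta L_i$ $\mu$-strongly convex for each $i$; then $E_i(w_i,w_{i-1})\ge\tfrac{\mu}{2}\|w_i-w_{i-1}\|^2\ge 0$ and $E_i\ge\eta L_i(w_i)\ge 0$, while local non-negativity gives $D_{L_i}(w_*,w_{i-1})\ge 0$ for $w_{i-1}\in B$. An induction traps the iterates: assuming $w_0,\dots,w_{i-1}\in B$, summing \eqref{eq:equality_SMD} at $w=w_*$ (where $v_i=0$) over the first $i$ steps gives $D_{\psi}(w_*,w_i)\le D_{\psi}(w_*,w_0)=O(\epsilon^2)$, so strong convexity forces $\|w_i-w_*\|\le\sqrt{\beta/\alpha}\,\epsilon\le r$ once $\epsilon$ is small relative to $r$ (a one-step estimate $\|w_i-w_{i-1}\|=O(\eta)$ guarantees no overshoot of the slightly enlarged ball on which convexity was assumed). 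With the trajectory trapped, letting $T\to\infty$ in \eqref{eq:nonlinearnov} makes $\sum_i E_i$ and $\sum_i D_{L_i}(w_*,w_{i-1})$ finite, so $E_i\to 0$; hence $L_i(w_i)\to 0$ and $\|w_i-w_{i-1}\|\to 0$. Finally, $D_{\psi}(w_*,w_i)$ is non-increasing, hence convergent; and for any subsequential limit $\bar w$, which lies in $\mathcal{W}$ because the residuals vanish and the data are revisited infinitely often, applying the identity with $w=\bar w\in B$ shows $D_{\psi}(\bar w,w_i)$ is non-increasing with a zero subsequential limit, forcing $w_i\to\bar w=:w_\infty\in\mathcal{W}$. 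This is claim~1.

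\emph{Part 2 (the $o(\epsilon)$ bound).} Here the identity alone is not enough, since on $\mathcal{W}$ it is an exact restatement of the definition of $w_*$; instead I would use the dual dynamics directly. Summing \eqref{eq:SMD} gives $g(w_\infty)-g(w_0)=-\eta\sum_i\nabla L_i(w_{i-1})=\eta\sum_i l'(r_{i-1})\nabla_w f(x_i,w_{i-1})$ with $r_{i-1}=y_i-f(x_i,w_{i-1})$, a combination of the model gradients along the trajectory. The Bregman-projection optimality of $w_*$ reads $g(w_*)-g(w_0)\in N_{w_*}\mathcal{W}=\mathrm{span}\{\nabla_w f(x_i,w_*)\}$, i.e.\ $P_{T_{w_*}}(g(w_*)-g(w_0))=0$, where $P_{T_{w_*}}$ projects onto $T_{w_*}\mathcal{W}$. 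The key estimate is that the tangential defect $P_{T_{w_\infty}}(g(w_\infty)-g(w_0))$ is higher order: replacing each $\nabla_w f(x_i,w_{i-1})$ by $\nabla_w f(x_i,w_\infty)\in N_{w_\infty}\mathcal{W}$ costs $O(\|w_{i-1}-w_\infty\|)$ per term, so the defect is a sum of products of dual step sizes and distances-to-limit, both of which vanish, making it $o(\epsilon)$ even though $g(w_\infty)-g(w_0)$ itself is $O(\epsilon)$. (In the linear case $\nabla_w f(x_i,\cdot)\equiv x_i$, the defect is exactly zero and one recovers $w_\infty=w_*$ as in Proposition~\ref{prop:ImpReg_SMD}.) Since the map $w\mapsto P_{T_w}(g(w)-g(w_0))$ on $\mathcal{W}$ vanishes at $w_*$ with nondegenerate (order-$\alpha$) derivative, an inverse-function argument converts this $o(\epsilon)$ defect into $\|w_\infty-w_*\|=o(\epsilon)$.

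\emph{Main obstacle.} Two points carry the difficulty. The first is making the Part~1 induction self-consistent: the local non-negativity of $D_{L_i}$ is exactly what is needed to bound the iterates, yet it only holds once they are known to be close, so $\epsilon$ must be chosen small enough (relative to the curvature of $\mathcal{W}$ and the smoothness of $f$) that the $O(\epsilon^2)$ budget can never push an iterate past radius $r$. The second, and sharper, difficulty is the quantitative claim in Part~2 that the tangential defect is $o(\epsilon)$: this requires a summability/rate estimate on $\sum_i\|g(w_i)-g(w_{i-1})\|\,\|w_{i-1}-w_\infty\|$ that does not follow from the identity (which only yields square-summable increments), and hence genuinely needs the geometric hypotheses---bounded curvature of $f$ and nondegeneracy of $\{\nabla_w f(x_i,\cdot)\}$---implicit in the statement.
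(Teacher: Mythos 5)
The first thing to note is that the paper never proves this theorem: it appears in Section~5.2 prefaced only by ``one can show the following result,'' with no argument in the appendix, and the only support offered is the heuristic pair of local facts---(i) $D_{L_i}(w,w_{i-1})\geq 0$ and (ii) weak dependence of $D_{L_i}(w,w_{i-1})$ on $w$, for $w_{i-1}$ close to $w$---together with the remark that ``both statements can be made precise.'' So your proposal is being compared against a sketch, not a proof. Your Part~1 is exactly the natural formalization of that sketch: run the identity \eqref{eq:nonlinearnov} at $w=w_*$ as a Lyapunov relation, use strong convexity/smoothness of $\psi$ to convert the $O(\epsilon^2)$ budget $D_\psi(w_*,w_0)$ into a trapping radius, and then repeat the argument of case~(i) of Proposition~\ref{prop:convergence_SMD} locally. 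This is sound in outline, but note that the ``local non-negativity'' you import from Figure~\ref{fig:local} is itself only approximate for nonlinear $f$: for square loss, writing $r'=y_i-f(x_i,w')$, one gets $D_{L_i}(w_*,w')=\tfrac{1}{2}r'^2-r'\cdot O(\|w_*-w'\|^2)$ by Taylor expansion, which can be (slightly) negative whenever $|r'|$ is small relative to the curvature term. Absorbing these per-step defects of size $\eta\,|r_{i-1}|\,O(\epsilon^2)$ into the budget requires $\eta\sum_i|r_{i-1}|=O(1)$---a summability of residuals that the identity does not provide. So the induction in Part~1 is not actually self-contained as written; it quietly needs the same rate estimate you later flag in Part~2.

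That Part~2 gap is the genuine one, and you name it correctly yourself. The dual-defect reduction---$g(w_*)-g(w_0)$ normal to $\mathcal{W}$ at $w_*$ by the Bregman-projection optimality condition, versus the tangential component of $g(w_\infty)-g(w_0)$ accumulated along the trajectory---is a sensible and genuinely new route (the paper offers nothing here, and in the linear case it collapses to Proposition~\ref{prop:ImpReg_SMD}, as you observe). But the identity \eqref{eq:sum_SMD} only gives $\sum_i E_i=O(\epsilon^2)$, hence square-summable increments $\|w_i-w_{i-1}\|$, and Cauchy--Schwarz leaves $\bigl(\sum_i\|w_{i-1}-w_\infty\|^2\bigr)^{1/2}$ uncontrolled, so the defect bound $\sum_i\eta\,|l'(r_{i-1})|\,\|w_{i-1}-w_\infty\|=o(\epsilon)$ does not close. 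The missing ingredient is a local linear-convergence (Polyak--{\L}ojasiewicz-type) lemma: under uniform linear independence of $\{\nabla_w f(x_i,w)\}$ near $w_*$, the residuals decay geometrically per pass over the data, which makes $\eta\sum_i|l'(r_{i-1})|=O(\epsilon)$, hence the tangential defect $O(\epsilon^2)=o(\epsilon)$, and simultaneously repairs the absorption issue in Part~1. This nondegeneracy hypothesis is not stated in the theorem, so in your writeup it must either be added explicitly or extracted from whatever precise form the paper's ``can be made precise'' conditions take; also, your normal-space identification $N_{w_*}\mathcal{W}=\mathrm{span}\{\nabla_w f(x_i,w_*)\}$ presupposes this same constraint qualification. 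With that lemma supplied, your two-part architecture would constitute an actual proof of a statement the paper only asserts.
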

This shows that if the initial condition is close enough, then we have convergence to a point $w_\infty$ that interpolates the data, and that $w_\infty$ is an order of magnitude closer to $w_*$ (the implicitly regularized solution) than the initial $w_0$ was. At first glance, this result seems rather dissatisfying. It relies on $w_0$ being close to the manifold ${\cal W}$ which appears hard to guarantee. We would now like to argue that in deep learning $w_0$ being close to ${\cal W}$ is often the case.

In the highly-overparameterized regime, $m\gg n$, and so the dimension of the manifold $\mathcal{W}$ is $m-n$, which is very large. Now if the $x_i$ are sufficiently random, then the tangent space to ${\cal W}$ at $w_*$ will be a randomly oriented affine subspace of dimension $m-n$. This means that any randomly chosen $w_0$ will whp have a very large component when projected onto $\mathcal{W}$. In particular, it can be shown that $\|w_*-w_0\|^2 = O(\frac{n}{m})\cdot\|y-f(x,w)\|^2$, where $y = \mbox{vec}(y_i,i=1,\ldots,n)$ and $f(x,w) = \mbox{vec}(f(x_i,w),i=1,\ldots,n)$. Thus, we may expect that, when $m\gg n$, the distance of any randomly chosen $w_0$ to $\mathcal{W}$ will be small and so SMD will converge to a point on $\mathcal{W}$ that approximately performs implicit regularization.

The gist of the argument is that (i) When $m\gg n$, any random initial condition is ``close'' to the $n-m$ dimensional solution manifold ${\cal W}$, (ii) when $w_0$ is ``close'' to $w_*$, then SMD converges to a point $w_\infty\in{\cal W}$, (iii) $w_\infty$ is ``an order of magnitude closer'' to $w_*$ than $w_0$ was, and (iv) thus, when highly overparamatrized, SMD converges to a point that exhibits implicit regularization.

Of course, this was a very heuristic argument that merits a much more careful analysis. But it is suggestive of the fact that SGD and SMD, when performed on highly-overparameterized nonlinear models, as occurs in deep learning, may exhibit implicit regularization.

\section{Concluding Remarks}\label{sec:conclusion}


We should remark that all the results stated throughout the paper extend to the case of time-varying step size $\eta_i$, with minimal modification. In particular, it is easy to show that in this case, the identity (the counterpart of Eq.~\eqref{eq:sum_SMD}) becomes
\begin{equation}
D_{\psi}(w,w_{0})+ \sum_{i=1}^T \eta_i l(v_i) = D_{\psi}(w,w_T) +\sum_{i=1}^T \left( E_i(w_i,w_{i-1}) +\eta_i D_{L_i}(w,w_{i-1}) \right) ,
\end{equation}
where $E_i(w_i,w_{i-1}) = D_{\psi}(w_i,w_{i-1})-\eta_i D_{L_i}(w_i,w_{i-1})+\eta_i L_i(w_i)$.
As a consequence, our main result will be the same as in Theorem~\ref{thm:SMD}, with the only difference that the small-step-size condition in this case is the convexity of $\psi(w)-\eta_iL_i(w)$ for all $i$, and the SMD with time-varying step size will be the optimal solution to the following minimax problem
\begin{equation}
\min_{\{w_i\}}\max_{w,\{v_i\}}\frac{D_{\psi}(w,w_T)+\sum_{i=1}^{T}\eta_iD_{L_i}(w,w_{i-1})}{D_{\psi}(w,w_0)+\sum_{i=1}^{T}\eta_il(v_i)} .
\end{equation}
Similarly, the convergence and implicit regularization results can be proven under the same conditions (See Appendix~\ref{sec:timevar} for more details on the time-varying case).


This paper opens up a variety of important directions for future work. Most of the analysis developed here is general, in terms of the \emph{model}, the \emph{loss function}, and the \emph{potential function}. Therefore, it would be interesting to study the implications of this theory for specific classes of models (such as different neural networks), specific losses, and specific mirror maps (which induce different regularization biases). Something for future work.


\bibliography{references}
\bibliographystyle{iclr2019_conference}

\newpage
\begin{center}
{\huge Supplementary Material}
\end{center}
\appendix

\section{Proof of Lemma~\ref{lem:equality_SMD}}\label{sec:proof_of_equality_SMD}
\begin{proof}
Let us start by expanding the Bregman divergence $D_{\psi}(w,w_i)$ based on its definition
\[ D_{\psi}(w,w_i) = \psi(w)-\psi(w_i)-\nabla\psi(w_i)^T(w-w_i).\]
By plugging the SMD update rule $\nabla\psi(w_i)=\nabla\psi(w_{i-1})-\eta\nabla L_i(w_{i-1})$ into this, we can write it as
\begin{equation}
D_{\psi}(w,w_i)= \psi(w)-\psi(w_i)-\nabla\psi(w_{i-1})^T(w-w_i) + \eta\nabla L_i(w_{i-1})^T(w-w_i) .
\end{equation}
Using the definition of Bregman divergence for $(w,w_{i-1})$ and $(w_i,w_{i-1})$, i.e., $D_{\psi}(w,w_{i-1}) = \psi(w)-\psi(w_{i-1})-\nabla\psi(w_{i-1})^T(w-w_{i-1})$ and $D_{\psi}(w_i,w_{i-1}) = \psi(w_i)-\psi(w_{i-1})-\nabla\psi(w_{i-1})^T(w_i-w_{i-1})$, we can express this as
\begin{align}
D_{\psi}(w,w_i) &= D_{\psi}(w,w_{i-1})+\psi(w_{i-1})+\nabla\psi(w_{i-1})^T(w-w_{i-1}) -\psi(w_i)\notag\\
&\hspace{4cm} -\nabla\psi(w_{i-1})^T(w-w_i) + \eta\nabla L_i(w_{i-1})^T(w-w_i)\\
&= D_{\psi}(w,w_{i-1})+\psi(w_{i-1})-\psi(w_i)+\nabla\psi(w_{i-1})^T(w_i-w_{i-1})\notag\\
&\hspace{7.5cm} +\eta\nabla L_i(w_{i-1})^T(w-w_i)\\
&= D_{\psi}(w,w_{i-1})-D_{\psi}(w_i,w_{i-1}) +\eta\nabla L_i(w_{i-1})^T(w-w_i) .
\end{align}
Expanding the last term using $w-w_i = (w-w_{i-1})-(w_i-w_{i-1})$, and following the definition of $D_{L_i}(.,.)$ from \eqref{eq:D_Li} for $(w,w_{i-1})$ and $(w_i,w_{i-1})$, we have
\begin{align}
D_{\psi}(w,w_i) &= D_{\psi}(w,w_{i-1})-D_{\psi}(w_i,w_{i-1}) +\eta\nabla L_i(w_{i-1})^T(w-w_{i-1})\notag\\
&\hspace{7cm} -\eta\nabla L_i(w_{i-1})^T(w_i-w_{i-1})\\
&= D_{\psi}(w,w_{i-1})-D_{\psi}(w_i,w_{i-1}) +\eta\left(L_i(w)-L_i(w_{i-1})-D_{L_i}(w,w_{i-1})\right)\notag\\
&\hspace{5cm} -\eta\left(L_i(w_i)-L_i(w_{i-1})-D_{L_i}(w_i,w_{i-1})\right)\\
&= D_{\psi}(w,w_{i-1})-D_{\psi}(w_i,w_{i-1}) +\eta\left(L_i(w)-D_{L_i}(w,w_{i-1})\right)\notag\\
&\hspace{6.5cm} -\eta\left(L_i(w_i)-D_{L_i}(w_i,w_{i-1})\right)
\end{align}
Defining $E_i(w_i,w_{i-1}) := D_{\psi}(w_i,w_{i-1})-\eta D_{L_i}(w_i,w_{i-1})+\eta L_i(w_i)$, we can write the above equality as
\begin{equation}
D_{\psi}(w,w_i) = D_{\psi}(w,w_{i-1})-E_i(w_i,w_{i-1}) +\eta\left(L_i(w)-D_{L_i}(w,w_{i-1})\right) .
\end{equation}
Notice that for any model class with additive noise, and any loss function $L_i$ that depends only on the residual (i.e. the difference between the prediction and the true label), the term $L_i(w)$ depends only on the noise term, for any ``true'' parameter $w$. In other words, for all $w$ that satisfy $y_i =f(x_i,w)+v_i$, we have $L_i(w)=l(y_i-f(x_i,w))=l(y_i-(y_i-v_i))=l(v_i)$ . Finally, reordering the terms leads to
\begin{equation}
D_{\psi}(w,w_i)+\eta D_{L_i}(w,w_{i-1}) +E_i(w_i,w_{i-1}) = D_{\psi}(w,w_{i-1}) +\eta l(v_i) ,
\end{equation}
which concludes the proof.
\end{proof}

\section{Proof of Theorem~\ref{thm:SMD}}\label{sec:proof_of_thm_SMD}
\begin{proof}
We prove the theorem in two parts. First, we show that the value of the minimax is at least $1$. Then we prove that the values is at most $1$, and is achieved by stochastic mirror descent for small enough step size.
\begin{enumerate}
\item
Consider the maximization problem
\begin{equation*}
\max_{w,\{v_i\}}\frac{D_{\psi}(w,w_T)+\eta\sum_{i=1}^{T}D_{L_i}(w,w_{i-1})}{D_{\psi}(w,w_0)+\eta\sum_{i=1}^{T}l(v_i)} .
\end{equation*}
Clearly, the optimal solution and the optimal values of this problem can, and will, be a function of $\{w_i\}$. Similarly, we can also choose feasible points that depend on $\{w_i\}$.
Any choice of a feasible point $(\hat{w},\{\hat{v}_i\})$ gives a lower bound on the value of the problem. Before choosing a feasible point, let us first expand the $D_{L_i}(w,w_{i-1})$ term in the numerator, according to its definition.
\begin{equation}
D_{L_i}(w,w_{i-1})=l(v_i) -l(y_i-f_i(w_{i-1})) +l'(y_i-f_i(w_{i-1}))\nabla f(w_{i-1})^T(w-w_{i-1}),
\end{equation}
where we have used the fact that $l(y_i-f_i(w))=l(v_i)$ for all consistent $w$, in the first term.

Now, we choose a feasible point as follows
\begin{equation}\label{hatvi}
\hat{v}_i=f_i(w_{i-1})-f_i(\hat{w}),
\end{equation}
where $\hat{w}$ is the choice of $w$, as will be described soon. The reason for choosing this value for the noise is that it ``fools'' the estimator by making its loss on the corresponding data point zero. In other words, for this choice, we have
\begin{align*}
D_{L_i}(w,w_{i-1})&=l(\hat{v}_i) -l(0) +l'(0)\nabla f(w_{i-1})^T(\hat{w}-w_{i-1})\\
& = l(\hat{v}_i)
\end{align*}
because $l(0)=l'(0)=0$. It should be clear at this point that this choice makes the second terms in the numerator and the denominator equal, independent of the choice of $\hat{w}$. What remains to do, in order to show the $1$ lower-bound, is to take care of the other two terms, i.e., $D_{\psi}(w,w_T)$ and $D_{\psi}(w,w_0)$. As we would like to make the ratio equal to one, we would like to have $D_{\psi}(w,w_T)=D_{\psi}(w,w_0)$,
which is equivalent to having
\[
\psi(w)-\psi(w_{T})-\nabla\psi(w_{T})^T (w-w_{T})=\psi(w)-\psi(w_{0})-\nabla\psi(w_{0})^T (w-w_{0})
\]
which is, in turn, equivalent to
\begin{equation}\label{eq:solve_for_w}
\left(\nabla\psi(w_{T})-\nabla\psi(w_{0})\right)^T w = -\psi(w_{T}) +\psi(w_{0}) +\nabla\psi(w_{T})^T w_{T} -\nabla\psi(w_{0})^T w_{0} .
\end{equation}
Since $\nabla\psi$ is an invertible function, $\nabla\psi(w_{T})-\nabla\psi(w_{0})\neq 0$, if $w_T\neq w_0$. Therefore, the above equation has a solution for $w$, if $w_T\neq w_0$. As a result, choosing $\hat{w}$ to be a solution to \eqref{eq:solve_for_w} makes $D_{\psi}(\hat{w},w_T)=D_{\psi}(\hat{w},w_0)$, if $w_T\neq w_0$. For the case when $w_T=w_0$, it is trivial that $D_{\psi}(\hat{w},w_T)=D_{\psi}(\hat{w},w_0)$ for any choice of $\hat{w}$. In this case, we only need to choose $\hat{w}$ to be different from $w_0$, to avoid making the ratio $\frac{0}{0}$. Hence, we have the following choice
\begin{equation}\label{hatw}
\hat{w}=\begin{cases}
\text{a solution of \eqref{eq:solve_for_w}} & \text{ for } w_T\neq w_0\\
w_0+\delta w \ \text{ for some } \delta w\neq 0 & \text{ for } w_T=w_0
\end{cases}
\end{equation}
Choosing the feasible point $\hat{w},\{v_i\}$ according to \eqref{hatw} and \eqref{hatvi}  leads to
\begin{multline}
\max_{w,\{v_i\}}\frac{D_{\psi}(w,w_T)+\eta\sum_{i=1}^{T}D_{L_i}(w,w_{i-1})}{D_{\psi}(w,w_0)+\eta\sum_{i=1}^{T}l(v_i)}\\
\geq \frac{D_{\psi}(\hat{w},w_T)+\eta\sum_{i=1}^{T}l(f_i(w_{i-1})-f_i(\hat{w}))}{D_{\psi}(\hat{w},w_0)+\eta\sum_{i=1}^{T}l(f_i(w_{i-1})-f_i(\hat{w}))}.
\end{multline}
Taking the minimum of both sides with respect to $\{w_i\}$, we have
\begin{multline}
\min_{\{w_i\}}\max_{w,\{v_i\}}\frac{D_{\psi}(w,w_T)+\eta\sum_{i=1}^{T}D_{L_i}(w,w_{i-1})}{D_{\psi}(w,w_0)+\eta\sum_{i=1}^{T}l(v_i)}\\
\geq \min_{\{w_i\}} \frac{D_{\psi}(\hat{w},w_T)+\eta\sum_{i=1}^{T}l(f_i(w_{i-1})-f_i(\hat{w}))}{D_{\psi}(\hat{w},w_0)+\eta\sum_{i=1}^{T}l(f_i(w_{i-1})-f_i(\hat{w}))}\ =\ 1.
\end{multline}
The equality to $1$ comes from the fact the that the optimal solution of the minimization either has $w^*_T=w_0$ or $w^*_T\neq w_0$, and in both cases the ratio is equal to $1$.

\item
Now we prove that, under the small step size condition (convexity of $\psi(w)-\eta L_i(w)$ for all $i$), SMD makes the minimax value at most $1$, which means that it is indeed an optimal solution. Recall from Lemma~\ref{lem:sum_SMD} that
\begin{equation*}
D_{\psi}(w,w_{0})+\eta \sum_{i=1}^T l(v_i) = D_{\psi}(w,w_T) +\sum_{i=1}^T E_i(w_i,w_{i-1}) +\eta \sum_{i=1}^T D_{L_i}(w,w_{i-1})  ,
\end{equation*}
where
\begin{equation*}
E_i(w_i,w_{i-1}) = D_{\psi}(w_i,w_{i-1})-\eta D_{L_i}(w_i,w_{i-1})+\eta L_i(w_i) .
\end{equation*}
It is easy to check that when $\psi(w)-\eta L_i(w)$ is convex, $D_{\psi}(w_i,w_{i-1})-\eta D_{L_i}(w_i,w_{i-1})$ is in fact a Bregman divergence (i.e. the Bregman divergence with respect to the potential $\psi(w)-\eta L_i(w)$), and therefore it is nonnegative for any $w_i$ and $w_{i-1}$. Furthermore, we know that the loss $L_i(w_i)$ is also nonnegative for all $w_i$. It follows that $E_i(w_i,w_{i-1})$ is nonnegative for all values of $w_i,w_{i-1}$ and $i$. As a result, we have the following bound.
\begin{equation}
D_{\psi}(w,w_{0})+\eta \sum_{i=1}^T l(v_i) \geq D_{\psi}(w,w_T) +\eta \sum_{i=1}^T D_{L_i}(w,w_{i-1}) .
\end{equation}
Since the Bregman divergence $D_{\psi}(w,w_{0})$ and the loss $l(v_i)$ are nonnegative, the left-hand side expression is nonnegative, and it follows that
\begin{equation}
\frac{D_{\psi}(w,w_T) +\eta \sum_{i=1}^T D_{L_i}(w,w_{i-1})} {D_{\psi}(w,w_{0})+\eta \sum_{i=1}^T l(v_i)}\leq 1 .
\end{equation}
In fact, this means that independent of the choice of the maximizer (i.e. for all $\{v_i\}$ and $w$), as long as the step size condition is met, SMD makes the ratio less than or equal to $1$.
\end{enumerate}
Combining the results of 1 and 2 above concludes the proof.
\end{proof}

\subsection{Proof of Theorem~\ref{thm:SGDminimax}}
\begin{proof}
This result is a special case of Theorem~\ref{thm:SMD}, which was proven above. In this case, $\psi(w)=\frac{1}{2}\|w\|^2$, $f(x_i,w)=x_i^Tw$, and $l(z)=\frac{1}{2}z^2$. Therefore, $D_{\psi}(w,w_T)=\frac{1}{2}\|w-w_T\|^2$, $D_{\psi}(w,w_0)=\frac{1}{2}\|w-w_0\|^2$, $D_{L_i}(w,w_{i-1})=\frac{1}{2}(x_i^Tw-x_i^Tw_{i-1})^2$, and $l(v_i)=\frac{1}{2}v_i^2$, which leads to the result.
\end{proof}

\section{Proof of Proposition~\ref{prop:convergence_SMD}}\label{sec:proof_of_prop_convergence_SMD}
\begin{proof}
To prove convergence, we appeal again to Equation~\eqref{eq:sum_SMD_noiseless}, i.e.
\begin{equation}\label{eq:proof_sum_SMD_noiseless}
D_{\psi}(w,w_{0}) = D_{\psi}(w,w_T) +\sum_{i=1}^T \left( E_i(w_i,w_{i-1}) +\eta D_{L_i}(w,w_{i-1}) \right) ,
\end{equation}
for all $w\in\mathcal{W}$. We prove the two cases separately.

\begin{enumerate}
\item The proof of case (i) is straightforward. When $l(\cdot)$ is differentiable and convex, $L_i$ is also convex, and therefore $D_{L_i}(w,w_{i-1})$ is nonnegative.
Moreover, when $\psi-\eta L_i$ is convex, $E_i(w_i,w_{i-1})$ is also nonnegative. Therefore, the entire summand in Eq.~\eqref{eq:proof_sum_SMD_noiseless} is nonnegative, and has to go to zero for $i\to\infty$. That is because as $T\to\infty$, the sum should remain bounded, i.e., $\sum_{i=1}^{\infty} \left( E_i(w_i,w_{i-1}) +\eta D_{L_i}(w,w_{i-1}) \right)\leq D_{\psi}(w,w_{0})$.
As a result of the non-negativity of both terms in the sum, we have both $E_i(w_i,w_{i-1})\to 0$ and $D_{L_i}(w,w_{i-1})\to 0$ as $i\to\infty$, which imply $L_i(w_{i-1})\to 0$. This implies that the updates in \eqref{eq:SMD} vanish and we get convergence, i.e., $w\to w_{\infty}$. Further, again because $L_i(w_{i-1})\to 0$, and 0 is the unique root of $l(\cdot)$, all the data point are being fit, which means $w_{\infty}\in\mathcal{W}$.

\item To prove case (ii), note that we have 
\begin{align}
D_{L_i}(w,w_{i-1}) &=L_i(w)-L_i(w_{i-1})-\nabla L_i(w_{i-1})^T(w-w_{i-1})\\
&= 0-l(y_i-x_i^Tw_{i-1})+l'(y_i-x_i^Tw_{i-1})x_i^T(w-w_{i-1})\\
&= -l(y_i-x_i^Tw_{i-1})+l'(y_i-x_i^Tw_{i-1})(y_i-x_i^Tw_{i-1}) , \label{eq:proof_convergence_SMD_quasi_1}
\end{align}
and
\begin{align}
E_i(w_i,w_{i-1}) &= D_{\psi}(w_i,w_{i-1})-\eta D_{L_i}(w_i,w_{i-1})+\eta L_i(w_i)\\
&= D_{\psi}(w_i,w_{i-1}) + \eta \left(L_i(w_{i-1})+\nabla L_i(w_{i-1})^T(w_i-w_{i-1})\right)\\
&= D_{\psi}(w_i,w_{i-1}) + \eta \left(l(y_i-x_i^Tw_{i-1})-l'(y_i-x_i^Tw_{i-1})x_i^T(w_i-w_{i-1})\right) . \label{eq:proof_convergence_SMD_quasi_2}
\end{align}
It follows from \eqref{eq:proof_convergence_SMD_quasi_1} and \eqref{eq:proof_convergence_SMD_quasi_2} that the summand in Equation~\eqref{eq:proof_sum_SMD_noiseless} is
\begin{align}
E_i(w_i,w_{i-1}) +\eta D_{L_i}(w,w_{i-1}) &= D_{\psi}(w_i,w_{i-1}) + \eta l'(y_i-x_i^Tw_{i-1})(y_i-x_i^Tw_{i}) .
\end{align}
The first term is a Bregman divergence, and is therefore nonnegative. In order to establish convergence, one needs to argue that the second term is nonnegative as well, so that the summand goes to zero as $i\to\infty$. Since $l(\cdot)$ is increasing for positive values and decreasing for negative values, it is enough to show that $y_i-x_i^Tw_{i-1}$ and $y_i-x_i^Tw_{i}$ have the same sign, in order to establish nonnegativity. It is not hard to see that if the distance between the two points is less than or equal to the distance of $y_i-x_i^Tw_{i}$ from the origin, then the signs are the same. In other words, if $|(y_i-x_i^Tw_{i})-(y_i-x_i^Tw_{i-1})|=|x_i^T(w_i-w_{i-1})|\leq |y_i-x_i^Tw_{i-1}|$, then the sign are the same.

Note that by the definition of $\alpha$-strong convexity of $\psi(\cdot)$, we have
\begin{equation}
(\nabla\psi(w_i)-\nabla\psi(w_{i-1}))^T(w_i-w_{i-1})\geq \alpha \|w_i-w_{i-1}\|^2 ,
\end{equation}
which implies
\begin{equation}
-\eta\nabla L_i(w_{i-1})^T(w_i-w_{i-1})\geq \alpha \|w_i-w_{i-1}\|^2 ,
\end{equation}
by substituting from the SMD update rule. Upper-bounding the left-hand side by $\eta\|\nabla L_i(w_{i-1})\|\|(w_i-w_{i-1})\|$ implies
\begin{equation}
\eta\|\nabla L_i(w_{i-1})\|\geq \alpha \|w_i-w_{i-1}\| .
\end{equation}
This implies that we have the following bound
\begin{equation}
|x_i^T(w_i-w_{i-1})|\leq \|x_i\| \|w_i-w_{i-1}\| \leq \frac{\eta\|x_i\|\|\nabla L_i(w_{i-1})\|}{\alpha} .
\end{equation}
It follows that if $\eta\leq\frac{\alpha|y_i-x_i^Tw_{i-1}|}{\|x_i\|\|\nabla L_i(w_{i-1})\|}$, for all $i$, then the signs are the same, and the summand in Eq.\eqref{eq:proof_sum_SMD_noiseless} is indeed nonnegative.
This condition can be equivalently expressed as $\eta\leq\frac{\alpha |y_i-x_i^Tw_{i-1}|}{\|x_i\|^2|l'(y_i-x_i^Tw_{i-1})|}$ for all $i$, or $\eta\leq\min_i\frac{\alpha |y_i-x_i^Tw_{i-1}|}{\|x_i\|^2|l'(y_i-x_i^Tw_{i-1})|}$, which is the condition in the statement of the proposition.

Now that we have argued that the summand is nonnegative, the convergence to $w_{\infty}\in\mathcal{W}$ is immediate. The reason is that both $D_{\psi}(w_i,w_{i-1})\to 0$ and $l'(y_i-x_i^Tw_{i-1})(y_i-x_i^Tw_{i})\to 0$, as $i\to\infty$. The first one implies convergence to a point $w_{\infty}$. The second one implies that either $y_i-x_i^Tw_{i-1}=0$ or $y_i-x_i^Tw_{i}= 0$, which, in turn, implies $w_{\infty}\in\mathcal{W}$.
\end{enumerate}
\end{proof}

\section{Time-Varying Step-Size}\label{sec:timevar}
The update rule for the stochastic mirror descent with time-varying step size is as follows.
\begin{equation}\label{eq:SMD_timevarying}
w_i = \argmin_w\ \eta_i w^T\nabla L_i(w_{i-1})+D_{\psi}(w,w_{i-1}) ,
\end{equation}
which can be equivalently expressed as $\nabla\psi(w_i)=\nabla\psi(w_{i-1})-\eta_i\nabla L_i(w_{i-1})$, for all $i$.
The main results in this case are as follows.
\begin{lemma}
For any (nonlinear) model $f(\cdot,\cdot)$, any differentiable loss $l(\cdot)$, any parameter $w$ and noise values $\{v_i\}$ that satisfy $y_i =f(x_i,w)+v_i$ for $i=1,\dots,n$, any initialization $w_0$, any step size sequence $\{\eta_i\}$, and any number of steps $T\geq 1$, the following relation holds for the SMD iterates $\{w_i\}$ given in Eq.~\eqref{eq:SMD_timevarying}
\begin{equation}
D_{\psi}(w,w_{0})+ \sum_{i=1}^T \eta_i l(v_i) = D_{\psi}(w,w_T) +\sum_{i=1}^T \left( E_i(w_i,w_{i-1}) +\eta_i D_{L_i}(w,w_{i-1}) \right) ,
\end{equation}
\end{lemma}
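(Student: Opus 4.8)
The plan is to follow exactly the two-step structure used for the constant-step-size case: first establish a per-step ``conservation law'' (the time-varying analogue of Lemma~\ref{lem:equality_SMD}), and then sum it over $i=1,\dots,T$ so that the Bregman-divergence terms telescope.

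First I would prove the single-step identity
\begin{equation}
D_{\psi}(w,w_{i-1})+\eta_i l(v_i) = D_{\psi}(w,w_i) +E_i(w_i,w_{i-1}) +\eta_i D_{L_i}(w,w_{i-1}) ,
\end{equation}
for every $i\geq 1$, where $E_i(w_i,w_{i-1}) = D_{\psi}(w_i,w_{i-1})-\eta_i D_{L_i}(w_i,w_{i-1})+\eta_i L_i(w_i)$. This is obtained by repeating the derivation in Appendix~\ref{sec:proof_of_equality_SMD} verbatim, the only change being that the update rule now reads $\nabla\psi(w_i)=\nabla\psi(w_{i-1})-\eta_i\nabla L_i(w_{i-1})$. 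Since the step size enters that derivation solely as the multiplicative coefficient in front of $\nabla L_i(w_{i-1})$, every subsequent manipulation---expanding $D_{\psi}(w,w_i)$, substituting the update, splitting $w-w_i=(w-w_{i-1})-(w_i-w_{i-1})$, and invoking $L_i(w)=l(v_i)$ for consistent $w$---carries through with $\eta$ replaced throughout step $i$ by $\eta_i$.

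Next I would rearrange the single-step identity as $D_{\psi}(w,w_{i-1})-D_{\psi}(w,w_i) = E_i(w_i,w_{i-1})+\eta_i D_{L_i}(w,w_{i-1})-\eta_i l(v_i)$ and sum over $i=1,\dots,T$. The crucial structural point is that the step size $\eta_i$ multiplies only quantities local to step $i$ (namely $l(v_i)$, $D_{L_i}(w,w_{i-1})$, and the terms inside $E_i$), whereas the two divergences $D_{\psi}(w,w_{i-1})$ and $D_{\psi}(w,w_i)$ that drive the telescoping are free of any step size. Hence the left-hand sum collapses to $D_{\psi}(w,w_0)-D_{\psi}(w,w_T)$ exactly as in the constant case, and reordering yields the claimed identity. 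There is no real obstacle here: the only thing to verify carefully is that allowing a different $\eta_i$ at each step does not couple distinct time indices, which is immediate from the observation above that $\eta_i$ never appears in a term linking step $i$ to any other step. Thus Lemma~\ref{lem:sum_SMD} generalizes to time-varying step sizes with no change beyond this bookkeeping.
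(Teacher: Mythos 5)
Your proposal is correct and matches the paper's own proof: the appendix likewise establishes the per-step identity $D_{\psi}(w,w_{i-1})+\eta_i l(v_i) = D_{\psi}(w,w_i) +E_i(w_i,w_{i-1}) +\eta_i D_{L_i}(w,w_{i-1})$ by rerunning the Lemma~\ref{lem:equality_SMD} derivation with $\eta$ replaced by $\eta_i$, then sums over $i=1,\dots,T$ so the potential-divergence terms telescope. Your added observation that $\eta_i$ only multiplies step-local quantities, leaving the telescoping terms untouched, is exactly the (implicit) reason the paper calls the summation straightforward.
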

\begin{proof}
The proof is straightforward by summing the following equation for all $i=1,\dots,T$
\begin{equation}
D_{\psi}(w,w_{i-1})+\eta_i l(v_i) = D_{\psi}(w,w_i) +E_i(w_i,w_{i-1}) +\eta_i D_{L_i}(w,w_{i-1}) ,
\end{equation}
which can be easily shown in the same way as in the proof of Lemma~\ref{lem:equality_SMD} in Appendix~\ref{sec:proof_of_equality_SMD}.
\end{proof}

\begin{theorem}
Consider any general model $f(\cdot,\cdot)$, and any differentiable loss function $l(\cdot)$ with property $l(0)=l'(0)=0$. For sufficiently small step size, i.e., for any sequence $\{\eta_i\}$ for which $\psi(w)-\eta_i L_i(w)$ is convex for all $i$, the SMD iterates $\{w_i\}$ given by Eq.~\eqref{eq:SMD_timevarying} are the optimal solution to the following minimization problem
\begin{equation}
\min_{\{w_i\}}\max_{w,\{v_i\}}\frac{D_{\psi}(w,w_T)+\sum_{i=1}^{T}\eta_i D_{L_i}(w,w_{i-1})}{D_{\psi}(w,w_0)+\sum_{i=1}^{T}\eta_i l(v_i)} .
\end{equation}
Furthermore, the optimal value (achieved by SMD) is $1$.
\end{theorem}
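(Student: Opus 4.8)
The plan is to mirror the two-part structure of the proof of Theorem~\ref{thm:SMD}, replacing the constant $\eta$ by the sequence $\{\eta_i\}$ throughout and invoking the time-varying identity established in the preceding lemma in place of Lemma~\ref{lem:sum_SMD}. First I would show the minimax value is at least $1$ by exhibiting, for any sequence of iterates $\{w_i\}$, a feasible maximizer $(\hat{w},\{\hat{v}_i\})$ at which the inner ratio equals $1$. Expanding $D_{L_i}(w,w_{i-1})$ via its definition, together with the fact that $l(y_i-f_i(w))=l(v_i)$ for consistent $w$, gives
\begin{equation*}
D_{L_i}(w,w_{i-1})=l(v_i)-l(y_i-f_i(w_{i-1}))+l'(y_i-f_i(w_{i-1}))\nabla f(w_{i-1})^T(w-w_{i-1}).
\end{equation*}
Choosing $\hat{v}_i=f_i(w_{i-1})-f_i(\hat{w})$ forces $y_i-f_i(w_{i-1})=0$, so that $l(0)=l'(0)=0$ collapses this to $D_{L_i}(\hat{w},w_{i-1})=l(\hat{v}_i)$, matching the second term of the numerator to that of the denominator regardless of $\hat{w}$.

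It then remains to equalize the boundary terms $D_\psi(\hat{w},w_T)$ and $D_\psi(\hat{w},w_0)$. Setting them equal yields a single linear equation in $\hat{w}$ whose coefficient is $\nabla\psi(w_T)-\nabla\psi(w_0)$; by strict convexity $\nabla\psi$ is invertible, so this coefficient is nonzero whenever $w_T\neq w_0$ and the equation is solvable, while if $w_T=w_0$ any $\hat{w}\neq w_0$ serves. This makes the chosen ratio exactly $1$, and taking the minimum over $\{w_i\}$ shows the minimax is $\geq 1$. Note this half uses only $l(0)=l'(0)=0$ and invertibility of $\nabla\psi$, not the step-size condition.

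For the upper bound I would invoke the time-varying identity and argue, exactly as in Theorem~\ref{thm:SMD}, that under the hypothesis each $E_i(w_i,w_{i-1})$ is nonnegative: when $\psi-\eta_i L_i$ is convex, the quantity $D_\psi(w_i,w_{i-1})-\eta_i D_{L_i}(w_i,w_{i-1})$ is precisely the Bregman divergence associated with the potential $\psi-\eta_i L_i$ and hence nonnegative, while $\eta_i L_i(w_i)\geq 0$ by nonnegativity of the loss. Discarding $\sum_i E_i(w_i,w_{i-1})\geq 0$ from the identity yields
\begin{equation*}
D_\psi(w,w_0)+\sum_{i=1}^T\eta_i l(v_i)\ \geq\ D_\psi(w,w_T)+\sum_{i=1}^T\eta_i D_{L_i}(w,w_{i-1}),
\end{equation*}
and since the left side is a sum of nonnegative terms, the ratio is $\leq 1$ for every admissible $w$ and $\{v_i\}$, so the SMD iterates cap the minimax at $1$. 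Combining the two bounds gives optimality with value $1$.

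The only respect in which the argument departs from the constant-step case is bookkeeping: the single factor $\eta$ is replaced by the per-index factor $\eta_i$ in the identity and in the convexity condition, and the construction of the feasible point is untouched since it never referenced $\eta$. I therefore expect no genuine obstacle. The one delicate point, as in Theorem~\ref{thm:SMD}, is the nonnegativity of $E_i$, which rests entirely on the per-index convexity of $\psi-\eta_i L_i$; this is exactly the stated small-step-size hypothesis, so the argument goes through verbatim once the $\eta_i$'s are tracked correctly.
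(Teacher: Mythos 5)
Your proposal is correct and takes essentially the same route as the paper: the paper's proof simply reruns the two-part argument of Theorem~\ref{thm:SMD} with $\eta$ replaced by $\eta_i$, using the time-varying identity to get nonnegativity of each $E_i(w_i,w_{i-1})$ from the convexity of $\psi - \eta_i L_i$ for the upper bound, with the feasible-point construction for the lower bound unchanged. Your remark that the lower-bound half never uses the step-size condition is likewise consistent with the paper's treatment.
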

\begin{proof}
The proof is similar to that of Theorem~\ref{thm:SMD}, as presented in Appendix~\ref{sec:proof_of_thm_SMD}. The argument for the upper-bound of $1$ is exactly the same. For the second part of the proof, we use the previous Lemma.
It follows from the convexity of $\psi(w)-\eta_i L_i(w)$ that $E_i(w_i,w_{i-1})\geq 0$, and as a result we have
\begin{equation}
\frac{D_{\psi}(w,w_T)+\sum_{i=1}^{T}\eta_i D_{L_i}(w,w_{i-1})}{D_{\psi}(w,w_0)+\sum_{i=1}^{T}\eta_i l(v_i)} \leq 1
\end{equation}
for SMD updates, which concludes the proof.
\end{proof}
The convergence and implicit regularization results hold similarly, and can be formally stated as follows.
\begin{proposition}
Consider the following two cases.
\begin{enumerate}[label=(\roman*)]
\item $l(\cdot)$ is differentiable and convex and has a unique root at 0, $\psi(\cdot)$ is strictly convex, and the positive sequence $\{\eta_i\}$ is such that $\psi-\eta_i L_i$ is convex for all $i$.
\item $l(\cdot)$ is differentiable and quasi-convex and has zero derivative only at 0, $\psi(\cdot)$ is $\alpha$-strongly convex, and $0<\eta_i\leq\frac{\alpha |y_i-x_i^Tw_{i-1}|}{\|x_i\|^2|l'(y_i-x_i^Tw_{i-1})|}$ for all $i$.
\end{enumerate}
If either (i) or (ii) holds, then for any initialization $w_0$, the SMD iterates given in Eq.~\eqref{eq:SMD_timevarying} converge to
\begin{equation}
w_{\infty}=\argmin_{w\in\mathcal{W}} D_{\psi}(w,w_0) .
\end{equation}
\end{proposition}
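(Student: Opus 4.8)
The plan is to mirror the constant-step-size argument of Proposition~\ref{prop:convergence_SMD} (Appendix~\ref{sec:proof_of_prop_convergence_SMD}) essentially verbatim, replacing the fixed $\eta$ by the sequence $\{\eta_i\}$ and starting from the time-varying fundamental identity of the preceding Lemma. In the interpolating regime $v_i=0$, so $l(v_i)=l(0)=0$, and that identity collapses to
\begin{equation*}
D_{\psi}(w,w_{0}) = D_{\psi}(w,w_T) +\sum_{i=1}^T \left( E_i(w_i,w_{i-1}) +\eta_i D_{L_i}(w,w_{i-1}) \right), \qquad \forall w\in\mathcal{W}.
\end{equation*}
The crux is to show that under either hypothesis each summand $E_i(w_i,w_{i-1})+\eta_i D_{L_i}(w,w_{i-1})$ is nonnegative. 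Once this holds, the left-hand side is a fixed finite constant and $D_{\psi}(w,w_T)\geq 0$, so the partial sums are bounded by $D_{\psi}(w,w_0)$; hence the series converges and every summand tends to $0$ as $i\to\infty$, which is what drives convergence.

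For case (i) I would argue that convexity of $l$ makes $L_i$ convex (linear model), so $D_{L_i}(w,w_{i-1})\geq 0$, while convexity of $\psi-\eta_i L_i$ makes $D_{\psi}(w_i,w_{i-1})-\eta_i D_{L_i}(w_i,w_{i-1})$ a genuine Bregman divergence; together with $\eta_i L_i(w_i)\geq 0$ this gives $E_i\geq 0$. Forcing the summand to zero yields $E_i\to 0$ and $D_{L_i}(w,w_{i-1})\to 0$, which together with the unique-root property of $l$ force the residuals $y_i-x_i^Tw_{i-1}$ to vanish, hence $L_i(w_{i-1})\to 0$, the updates \eqref{eq:SMD_timevarying} vanish, and $w_i\to w_\infty\in\mathcal{W}$.

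Case (ii) is where I expect the main obstacle, since here the summand is not term-by-term nonnegative for free. I would first rewrite it, exactly as in \eqref{eq:proof_convergence_SMD_quasi_1}--\eqref{eq:proof_convergence_SMD_quasi_2}, as
\begin{equation*}
E_i(w_i,w_{i-1})+\eta_i D_{L_i}(w,w_{i-1}) = D_{\psi}(w_i,w_{i-1}) + \eta_i\, l'(r_{i-1})\,(y_i-x_i^Tw_i),\qquad r_{i-1}:=y_i-x_i^Tw_{i-1}.
\end{equation*}
The first term is a Bregman divergence, hence nonnegative; the delicate part is the sign of the second. By quasi-convexity with $l'$ vanishing only at $0$, $l'(r_{i-1})$ has the same sign as $r_{i-1}$, so it suffices to show that $r_{i-1}$ and $r_i:=y_i-x_i^Tw_i$ share a sign, i.e.\ that the step cannot overshoot the root. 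Here I would invoke $\alpha$-strong convexity of $\psi$: from $(\nabla\psi(w_i)-\nabla\psi(w_{i-1}))^T(w_i-w_{i-1})\geq \alpha\|w_i-w_{i-1}\|^2$ and the update rule, Cauchy--Schwarz gives $\alpha\|w_i-w_{i-1}\|\leq \eta_i\|\nabla L_i(w_{i-1})\|=\eta_i|l'(r_{i-1})|\,\|x_i\|$, whence $|x_i^T(w_i-w_{i-1})|\leq \eta_i\|x_i\|^2|l'(r_{i-1})|/\alpha$. The per-step hypothesis $\eta_i\leq \alpha|r_{i-1}|/(\|x_i\|^2|l'(r_{i-1})|)$ bounds this by $|r_{i-1}|$, so $|r_i-r_{i-1}|\leq|r_{i-1}|$ and the sign is preserved. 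The summand is then nonnegative, both pieces tend to zero, $D_{\psi}(w_i,w_{i-1})\to 0$ gives $w_i\to w_\infty$, and $l'(r_{i-1})(y_i-x_i^Tw_i)\to 0$ gives $w_\infty\in\mathcal{W}$.

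Finally, for the implicit-regularization conclusion I would observe that $E_i(w_i,w_{i-1})$ and, in the linear interpolating case, $D_{L_i}(w,w_{i-1})$ are both \emph{independent of $w$}, exactly as in Proposition~\ref{prop:ImpReg_SMD}. Thus the entire right-hand sum is a constant $S_\infty$ independent of $w\in\mathcal{W}$; letting $T\to\infty$ and using $w_T\to w_\infty\in\mathcal{W}$ gives $D_{\psi}(w,w_0)-D_{\psi}(w,w_\infty)=S_\infty$ for every $w\in\mathcal{W}$. Evaluating at $w=w_\infty$ yields $S_\infty=D_{\psi}(w_\infty,w_0)$, and then $D_{\psi}(w,w_0)=D_{\psi}(w,w_\infty)+D_{\psi}(w_\infty,w_0)\geq D_{\psi}(w_\infty,w_0)$ by nonnegativity of the Bregman divergence, so $w_\infty=\argmin_{w\in\mathcal{W}}D_{\psi}(w,w_0)$. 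The genuinely new work relative to Appendix~\ref{sec:proof_of_prop_convergence_SMD} is confined to the case-(ii) non-overshoot estimate, which is precisely what the per-step choice of $\eta_i$ is engineered to control.
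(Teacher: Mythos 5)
Your proposal is correct and is essentially the paper's own proof: the paper disposes of this proposition by stating that the argument of Proposition~\ref{prop:convergence_SMD} (Appendix~\ref{sec:proof_of_prop_convergence_SMD}) carries over with $\eta$ replaced by $\eta_i$, and your write-up reproduces exactly that argument---case-(i) nonnegativity via convexity of $l$ and of $\psi-\eta_i L_i$, the case-(ii) rewriting of the summand as $D_{\psi}(w_i,w_{i-1})+\eta_i l'(r_{i-1})(y_i-x_i^Tw_i)$ with the strong-convexity/Cauchy--Schwarz non-overshoot bound, and the $w$-independence of the sum for the regularization conclusion as in Proposition~\ref{prop:ImpReg_SMD}. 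Your evaluation at $w=w_\infty$ yielding $D_{\psi}(w,w_0)=D_{\psi}(w,w_\infty)+D_{\psi}(w_\infty,w_0)$ is only a slightly more explicit rendering of the paper's ``minimize both sides over $w\in\mathcal{W}$'' step, not a different route.
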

\begin{proof}
The proof is similar to that of Proposition~\ref{prop:convergence_SMD}, as provided in Appendix~\ref{sec:proof_of_prop_convergence_SMD}.
\end{proof}

\end{document}